\newif\ifuseICLR
\def\eqref#1{equation~\ref{#1}}
\def\1{\bm{1}}
\def\ra{{\textnormal{a}}}
\def\rx{{\textnormal{x}}}
\def\rva{{\mathbf{a}}}
\def\erva{{\textnormal{a}}}
\def\ervx{{\textnormal{x}}}
\def\rmA{{\mathbf{A}}}
\def\vmu{{\bm{\mu}}}
\def\vtheta{{\bm{\theta}}}
\def\va{{\bm{a}}}
\def\ve{{\bm{e}}}
\def\vu{{\bm{u}}}
\def\vx{{\bm{x}}}
\def\eva{{a}}
\def\mA{{\bm{A}}}
\def\mH{{\bm{H}}}
\def\mI{{\bm{I}}}
\def\mJ{{\bm{J}}}
\def\mX{{\bm{X}}}
\def\mSigma{{\bm{\Sigma}}}
\DeclareMathAlphabet{\mathsfit}{\encodingdefault}{\sfdefault}{m}{sl}
\SetMathAlphabet{\mathsfit}{bold}{\encodingdefault}{\sfdefault}{bx}{n}
\newcommand{\tens}[1]{\bm{\mathsfit{#1}}}
\def\tA{{\tens{A}}}
\def\tX{{\tens{X}}}
\def\gG{{\mathcal{G}}}
\def\sA{{\mathbb{A}}}
\def\sB{{\mathbb{B}}}
\def\sS{{\mathbb{S}}}
\def\emA{{A}}
\newcommand{\etens}[1]{\mathsfit{#1}}
\def\etA{{\etens{A}}}
\newcommand{\E}{\mathbb{E}}
\newcommand{\R}{\mathbb{R}}
\newcommand{\KL}{D_{\mathrm{KL}}}
\newcommand{\Var}{\mathrm{Var}}
\newcommand{\Cov}{\mathrm{Cov}}
\newcommand{\normltwo}{L^2}
\newcommand{\normlp}{L^p}
\newcommand{\parents}{Pa} 
\newcommand{\alg}{{\sc Trail}}
\newcommand{\algnospace}{{\sc Trail}}
\newcommand{\algFULL}{\emph{\underline{T}oken \underline{R}esponse and Embedding \underline{A}daptive \underline{I}nference \underline{L}ayer}\xspace}
\title{Don't Stop Me Now: 
\\
Embedding Based Scheduling for LLMs}
\author{
  Rana Shahout$^1$ \\
  \And
  Eran Malach$^1$ \\
    \And
  Chunwei Liu$^2$ \\
  \And
  Weifan Jiang$^1$ \\
  \And
  Minlan Yu$^1$ \\
  \And
  Michael Mitzenmacher$^1$ \\
  \\
  $^1$Harvard University \\
  $^2$MIT \\
}
\newtheorem*{theorem*}{Theorem}
\newtheorem{theorem}{Theorem}
\begin{document}

\doparttoc 
\faketableofcontents 

\part{} 

\maketitle

\begin{abstract}


Efficient scheduling is crucial for interactive Large Language Model (LLM) applications, where low request completion time directly impacts user engagement.
Size-based scheduling algorithms like Shortest Remaining Process Time (SRPT) aim to reduce average request completion time by leveraging known or estimated request sizes and allowing preemption by incoming jobs with shorter service times.
However, two main challenges arise when applying size-based scheduling to LLM systems. First, accurately predicting output lengths from prompts is challenging and often resource-intensive, making it impractical for many systems. As a result, the state-of-the-art LLM systems default to first-come, first-served scheduling, which can lead to head-of-line blocking and reduced system efficiency. Second, preemption introduces extra memory overhead to LLM systems as they must maintain intermediate states for unfinished (preempted) requests.

In this paper, we propose \algnospace{}, a method to obtain output predictions from the target LLM itself. After generating each output token, we recycle the embedding of its internal structure as input for a lightweight classifier that predicts the remaining length for each running request.
Using these predictions, we propose a prediction-based SRPT variant with limited preemption designed to account for memory overhead in LLM systems.
This variant allows preemption early in request execution when memory consumption is low but restricts preemption as requests approach completion to optimize resource utilization. On the theoretical side, we derive a closed-form formula for this SRPT variant in an M/G/1 queue model, which demonstrates its potential value. In our system, we implement this preemption policy alongside our embedding-based prediction method. Our refined predictions from layer embeddings achieve 2.66x lower mean absolute error compared to BERT predictions from sequence prompts. \alg{} achieves 1.66x to 2.01x lower mean latency on the Alpaca dataset and 1.76x to 24.07x lower mean time to the first token compared to the state-of-the-art serving system.


\end{abstract}
\section{Introduction}

Recent advances in large language models (LLMs) have sparked a new generation of interactive AI applications. OpenAI's ChatGPT~\citep{openai2022chatgpt} exemplifies this trend, facilitating conversational interaction across diverse tasks. The interactive nature of these applications requires low request completion time to ensure user engagement. Users expect near-instant responses, making efficient inference serving critical for LLM-based interactive AI applications.

LLM inference requests exhibit a distinct autoregressive pattern comprising multiple iterations in which each output token is appended to the input to generate the next token. Orca~\citep{yu2022orca} and vLLM~\citep{kwon2023efficient} are state-of-the-art solutions for LLM inference systems. These systems use iteration-level scheduling, which allows new requests to be added or completed requests to be removed at the end of each iteration, providing greater flexibility in managing the processing batch than request-level scheduling. However, these works still process requests using a first-come-first-served (FCFS) policy, leading to head-of-line blocking~\citep{kaffes2019shinjuku} under high load. This problem is particularly severe for LLM inference, where many short requests may wait for other long requests to finish.

Size-based scheduling aims to reduce request completion time by leveraging known or estimated request sizes. For LLM requests, the execution time depends on both input and output lengths, with the latter being unknown and potentially variable. 
Preemption allows for dynamic scheduling: when one scheduled request finishes generating an output token, we can decide whether to continue this request or preempt it with another (newly arrived) request. The Shortest Remaining Process Time (SRPT) policy is an example of preemption scheduling.
However, preemption here introduces extra memory overhead to maintain an intermediate state for started but unfinished requests. LLMs maintain a key-value (KV) cache for each Transformer layer to store the intermediate state. In preemption scheduling policies, the cache must keep the intermediate state for all started but unfinished requests in the waiting queue, which may cause memory overflow, given the large size of LLMs and the limited memory capacity of GPUs. (Non-preemptive policies, e.g. FCFS, do not face this issue, as there are no unfinished requests in the waiting queue.)

Recent works propose methods to predict request sizes. $S^3$~\citep{jin2023s} fine-tunes a BERT model~\citep{sanh2019distilbert} to predict output sequence lengths from input prompts. While this model is lightweight in resources, its accuracy diminishes for requests with varying execution times. Zheng et al.~\citep{zheng2024response} employ a separate, lighter LLM to predict output lengths before scheduling, achieving higher precision than the BERT model. However, this approach consumes more resources, raising questions about its efficiency and cost-effectiveness in practical applications. Another method proposed by the same work is \emph{Perception in Advance}, where models are asked to predict the length of the responses they are about to generate. However, this approach is limited by potential dependencies between the prediction and the generated output, which may affect the quality of the output.

In this paper, we introduce \algnospace{} (\algFULL{}), a novel approach that improves the response time in LLM inference through two key contributions: (1) iteration-level predictions for request lengths with low overhead and high accuracy to enable size-based scheduling, and (2) preemption scheduling at token granularity that accounts for memory overhead. For predictions, our approach takes advantage of the autoregressive nature of LLM output generation to predict the output size and facilitate the prediction of request length at the granularity of individual tokens. We begin with an initial prediction based on the prompt and iteratively refine these predictions as the sequence progresses by innovatively using LLM layer embeddings. Specifically, we ``recycle'' the embedding from intermediate Transformer layers of the LLM, feeding them into a lightweight linear classifier to predict the remaining sequence length. This method, which involves analyzing the outputs of the internal model layer by using them as input for a separate machine learning predictor, is commonly referred to as \emph{probing}~\citep{belinkov2022probing, hewitt2019designing, hewitt2019structural}. This approach combines the advantages of LLM-based prediction with computational efficiency, eliminating the need for an additional LLM dedicated solely to length predictions.
To address the memory constraints for preemption scheduling, \algnospace{} limits the number of times each request can be preempted and introduces SRPT with limited preemptions. Early in the request's execution, preemption is allowed since its KV cache memory usage is small, making preemption less costly in terms of memory. However, as the request progresses and its memory consumption in the KV cache grows, preemption becomes more expensive. To manage this, we turn the preemption off in the later stages of the request to avoid a heavy memory overhead.

Our contributions in this paper are (1) an output length prediction framework using recycled LLM embeddings, with dynamic refinement at each iteration. In our experiments, refined predictions from layer embeddings achieve 2.66x lower mean absolute error compared to BERT predictions from sequence prompts (2) a proposed variant of Shortest Remaining Processing Time using these predictions with limited preemptions (3) a derivation of a closed-form formula for this SRPT variant in an M/G/1 queue model.
We also show via experiments that our prediction and scheduling approaches result in mitigating head-of-line blocking. When \alg{} is integrated with a state-of-the-art LLM system, it has 1.66x to 2.01x lower mean latency and 1.76x to 24.07x lower mean time to first token compared to the state-of-the-art serving system tested with the Alpaca dataset. Our \algnospace{} approach has the potential to significantly enhance LLM inference efficiency and reliability, paving the way for more adaptive LLM serving systems.

\section{Background and Motivation}
\label{sec:background_motivation}

\paragraph{Transformer-Based Generative Models and Key-Value Cache.}

At each step, a Transformer model predicts the next most likely token based on the sequence of tokens generated so far. A generative model of length $n$ must perform $n$ iterations, with each token passing through multiple transformer layers composed of self-attention and feed-forward networks.

In a typical iteration at the $i$-th step, the model computes over all previously generated tokens ($t_0, t_1, \dots, t_{i-1}$) via self-attention. This can be expressed as:
\[
h_{\text{out}} = \text{softmax}\left(\frac{q_i \cdot K^\top}{\sqrt{d_h}}\right) \cdot V
\]
Where $q_i$ is the query vector representing the hidden state of the current token $t_i$, and $K, V \in \mathbb{R}^{i \times d_h}$ are matrices representing the keys and values derived from all previous tokens.

To enhance efficiency, LLMs cache the key and value matrices (KV cache) throughout the sequence generation process, eliminating the need to recompute them at every iteration. This caching mechanism significantly reduces computation time but requires substantial memory proportional to the number of layers and hidden dimensions. As more tokens are generated, the cache stores information from all previously generated tokens, and its memory demand grows linearly with the length of the sequence. This makes the memory consumption of the KV cache considerable, especially for long sequences.
For example, in the GPT-3 175B model, a single request with a sequence length of 512 tokens requires at least 2.3 GB of memory to store the key-value pairs. The limited capacity of GPU memory restricts the size of the KV cache and poses a challenge to efficiently implementing preemptive scheduling policies.

\paragraph{Iteration-Level Scheduling.}
Iteration-level scheduling differs from the conventional request-level scheduling approach. In request-level scheduling, the system processes a batch of requests to completion, forcing requests that finish earlier to wait until the entire batch completes, while new requests must remain in a queue until the next batch begins. In contrast, iteration-level scheduling~\citep{yu2022orca} offers flexibility by processing only a single iteration (i.e., generating one output token) for each request in the batch. After each iteration, the scheduler is called, and the completed requests can exit the batch while newly arrived requests can be added, allowing dynamic batch adjustments. However, the batch size remains constrained by the GPU memory capacity.
\section{Method}

\begin{figure}[th]
    \centering
    \includegraphics[width=0.6\linewidth]{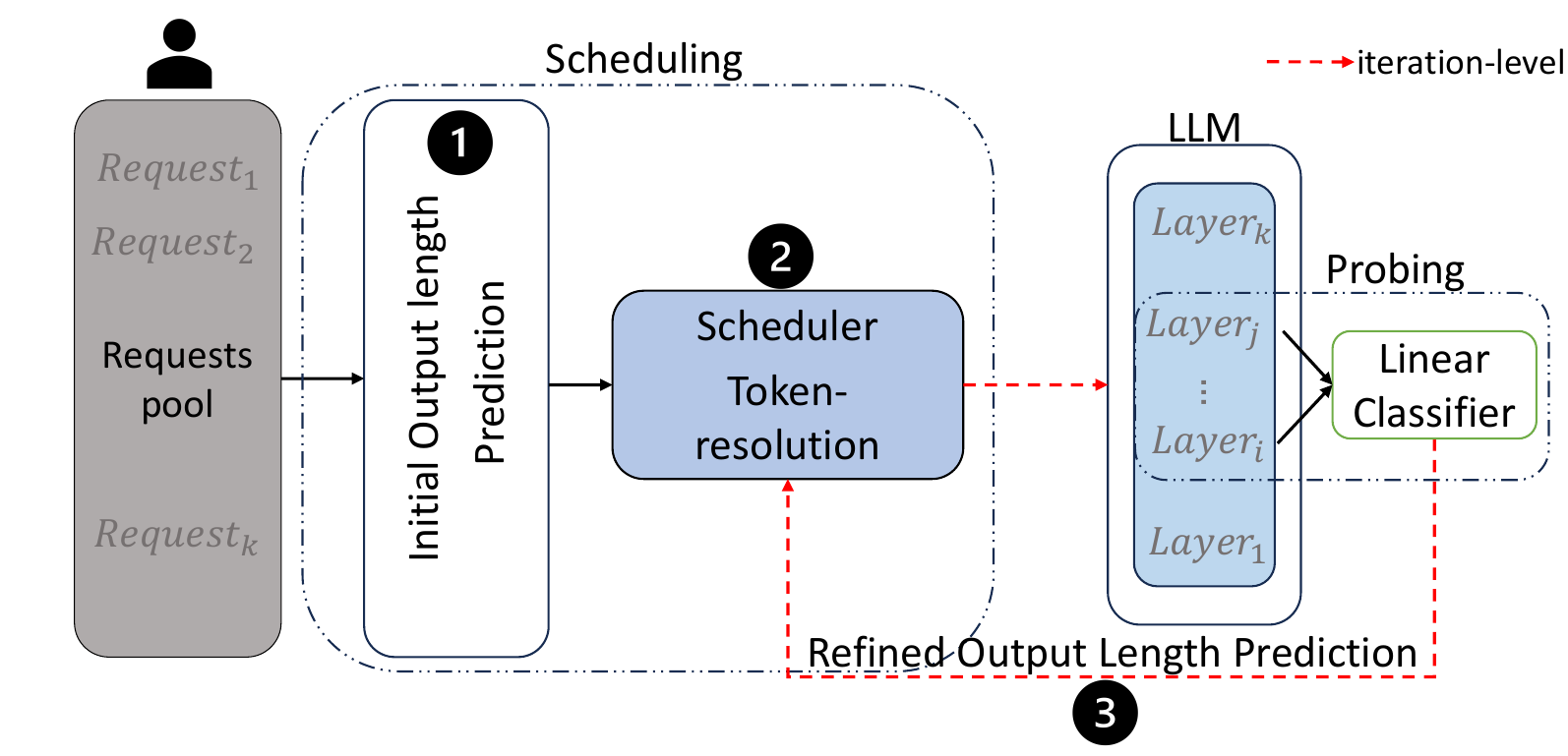}
    \caption{\alg{} architecture. The system (1) initially orders requests using a BERT model, (2) schedules requests using a modified SPRPT with limited preemption, and (3) refines predictions during token generation using embeddings from the LLM's internal layers. At every iteration, steps 2 and 3 are repeated (represented as red dashed lines), which allows preemption at iteration-level granularity and refined predictions. We focus on identifying the LLM layer that best predicts output length rather than using multi-layer embeddings ($i=j=11$).}
    \label{fig:alg_arch}
\end{figure}

\alg{} leverages the autoregressive nature of LLM inference and iteration-level scheduling, employing two primary strategies to reduce LLM inference response time: iteration-level prediction and preemption that accounts for memory overhead. \alg{} uses low-overhead, high-accuracy predictions by utilizing the intermediate layer embeddings of the LLM itself to predict output length while implementing token-granular, limited preemption scheduling to account for the memory overhead associated with preemption.
Figure~\ref{fig:alg_arch} illustrates the \alg{} architecture. Users submit requests to the request pool. \alg{} initially orders requests using a BERT model based on input prompts (Step 1). This method relies exclusively on the prompt using BERT suggested in~\citep{jin2023s}. Given the predictions of the output lengths, the scheduler implements a variant of the Shortest Predicted Remaining Process Time (SPRPT) with limited preemption (Step 2, Section~\ref{sec:scheduling}). Although SPRPT is traditionally designed for single-server, sequential scheduling, we adapt it to handle multiple concurrent requests while still prioritizing those with the shortest predicted remaining time. The number of requests that can be scheduled simultaneously is limited by the available GPU memory. As the LLM generates each output token, we refine predictions using embeddings from the LLM's intermediate layers. A linear classifier processes these embeddings and informs the scheduler, which adjusts based on updated predictions (Section~\ref{sec:output_prediction}).

\subsection{Refined Output Length Prediction}
\label{sec:output_prediction}

\begin{figure}[t]
    \centering
    \includegraphics[width=0.5\linewidth]{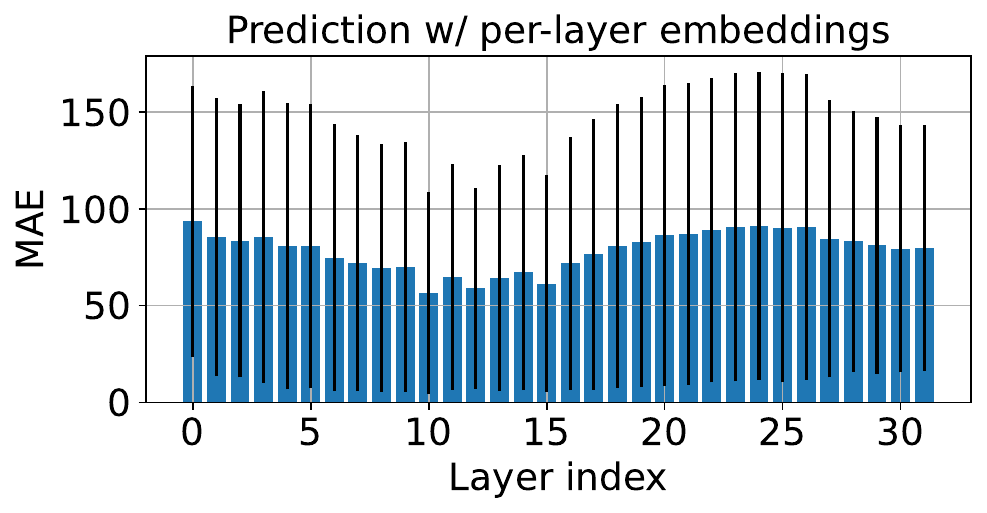}
    \caption{MAE for length prediction using embeddings vs. layer (1,000 prompts).}
    \label{fig:layers_acc}
\end{figure}

In this section, we describe our method for estimating the prompt length based on intermediate layer embeddings in the LLM.

\textbf{\emph{Problem Definition.}}
Denote by $\vu^{(1)}, \dots, \vu^{(N)} \in \mathbb{R}^d$ the output of some intermediate layer for the $N$ output tokens generated by the model (where $d$ is the hidden dimension of the Transformer). Additionally, we denote by $\vu^{(0)}$ the embedding of the input (during the prefilling phase), which is generated by averaging the embeddings of all the input tokens at the given layer. Let $B_1, \dots, B_k \subset \mathbb{N}$ be a choice of bins for the length prediction, i.e. $B_i = \{b_i, b_i+1, \dots, b_{i+1}-1\}$ for some choice of bin boundaries $b_1 < b_2 < \dots < b_{k+1}$. 
We train a linear classifier that, given some embedding $\vu^{(t)}$, outputs a vector $p^{(t)} \in [0,1]^k$ where $p^{(t)}(j)$ indicates the probability that the number of remaining tokens (i.e., $N-t$) is in $B_j$.

Our approach focuses on identifying which LLM layer provides the most accurate predictions for the output length. One potential extension would be to select multiple layers and estimate the prediction using either a weighted average of their outputs or training a linear classifier based on embeddings from several layers. We leave this multi-layer approach for future work.
To do so, we have to profile numerous parameters across all LLM layers for each request.

Using LLama3-8b-instruct as our model and the Alpaca dataset~\citep{alpaca}, our profiling process begins by extracting embedded elements from each layer during the prefilling and decoding phases. We profile embeddings across all 32 layers for each token, using 1,000 prompts from the dataset, and retain the embedding tensors along with the remaining token counts as training data.
The shape of the embedding tensor after the prefilling phase and before decoding is $[1,44,4096]$, while the embedding tensor for each decoding iteration is $[1,1,4096]$.

\textbf{\emph{Predictor architecture.}} To develop our predictor, we train an MLP using the embeddings from each layer and evaluate prediction accuracy. We use a neural network model consisting of a fully connected network with two linear layers. The first layer maps the input embedding to a 512-dimensional space, followed by a ReLU activation. The second layer outputs predictions for the number of output tokens by classifying the predicted length into one of $k=10$ equal-width bins, representing output lengths between 0 and 512 tokens. 
The $i$-th bin $b_i$ covers the range $\left[\frac{512i}{10}, \frac{512(i+1)}{10}\right)$. For the final bin, $b_{10}$, it includes the upper boundary and covers the range $[460.8, 512]$.

The model is trained over 30 epochs with a batch size of 32, using the AdamW optimizer to control for overfitting. We employ a cosine annealing schedule to reduce the learning rate from 0.01 to 0 gradually. The loss function is CrossEntropyLoss, appropriate for our multi-class classification task.
Here, we present one example of a predictor; our approach can, therefore, be used with any suitable and efficient learning scheme that yields a predictor.
For the first token prediction, we compute the average of all token embeddings in the prompt, and this averaged embedding serves as input to the predictor.
Preliminary evaluations across all 32 layers (Figure~\ref{fig:layers_acc}) indicate that layers 10-15 provide the most accurate length predictions.

\textbf{\emph{Focused profiling}}. Based on this result, we concentrate our profiling on these layers, gathering over 7 million training pairs. Subsequently, we train the predictor, employing smoothing techniques to improve the accuracy of our length predictions further.

\textbf{\emph{Smoothing.}} We observe that while the predicted probability vector in each iteration is reasonable, there can be large variance between iterations. We, therefore, employ Bayesian inference~\citep{sarkka2023bayesian} to update the probability estimate as new predictions become available, maintaining a more accurate estimate of the probability distribution over time.
In the context of Bayesian inference, the estimate from previous iterations provides a starting point for the current prediction. After observing the current prediction, the initial estimate is updated to reflect the new information. This update is done by adjusting the previous estimate based on how well it aligns with the current prediction and then normalizing it. The updated estimate from this iteration will then be used as the starting point for the next one.

At each iteration \( t \), the prior is updated because the predicted length may shift between bins over time.
Specifically, let \( T \in [0,1]^{k \times k} \) represent the transition matrix, where \( T_{i,j} \) denotes the probability that a value in bin \( B_j \) at iteration \( t-1 \) moves to bin \( B_i \) at iteration \( t \). In our case, transitions can only occur between neighboring bins, specifically from \( B_{i+1} \) to \( B_i \). We assume that lengths within each bin are uniformly distributed. Therefore, the diagonal entry \( T_{i,i} \) reflects the probability that the value remains in the same bin, which is \( 1 - \frac{1}{\text{bin size}} \), while the off-diagonal entry \( T_{i,i+1} \) represents the probability of moving from \( B_{i+1} \) to \( B_i \), which is \( \frac{1}{\text{bin size}} \). $T$ is calculated once from bin sizes, with its structure detailed in Appendix~\ref{appendix:matrix_T}.
The estimated probability \( \hat{q}^{(t)} \) at iteration \( t \) is computed as follows: 
\begin{enumerate}
    \item Initialize \( \hat{q}^{(0)} = p^{(0)} \).
    \item At each iteration \( t \), update the prior: \( \hat{q}_{\mathrm{prior}}^{(t)} = T \cdot \hat{q}_{\mathrm{prior}}^{(t-1)} \).
    \item Update the current probability: 
    \( \hat{q}^{(t)}(i) = \frac{\hat{q}_{\mathrm{prior}}^{(t)}(i)p^{(t)}(i)}{\sum_{j} \hat{q}_{\mathrm{prior}}^{(t)}(j)p^{(t)}(j)} \).
\end{enumerate}

The predicted length at iteration $t$ is $L_t = \sum_i \hat{q}^{(t)}(i) \cdot m_i$, where $m_i$ is the average length in bin $B_i$, namely $m_i = \frac{b_i+b_{i+1}}{2} = \frac{128(2i+1)}{5}$. 
This iterative process refines the prior distribution over time. Figure \ref{fig:mae_refinement} shows the mean absolute error for predictions from different model layers.

\begin{figure}[htbp]
    \centering
    \begin{minipage}{0.45\textwidth}
        \centering
        \includegraphics[width=\linewidth]{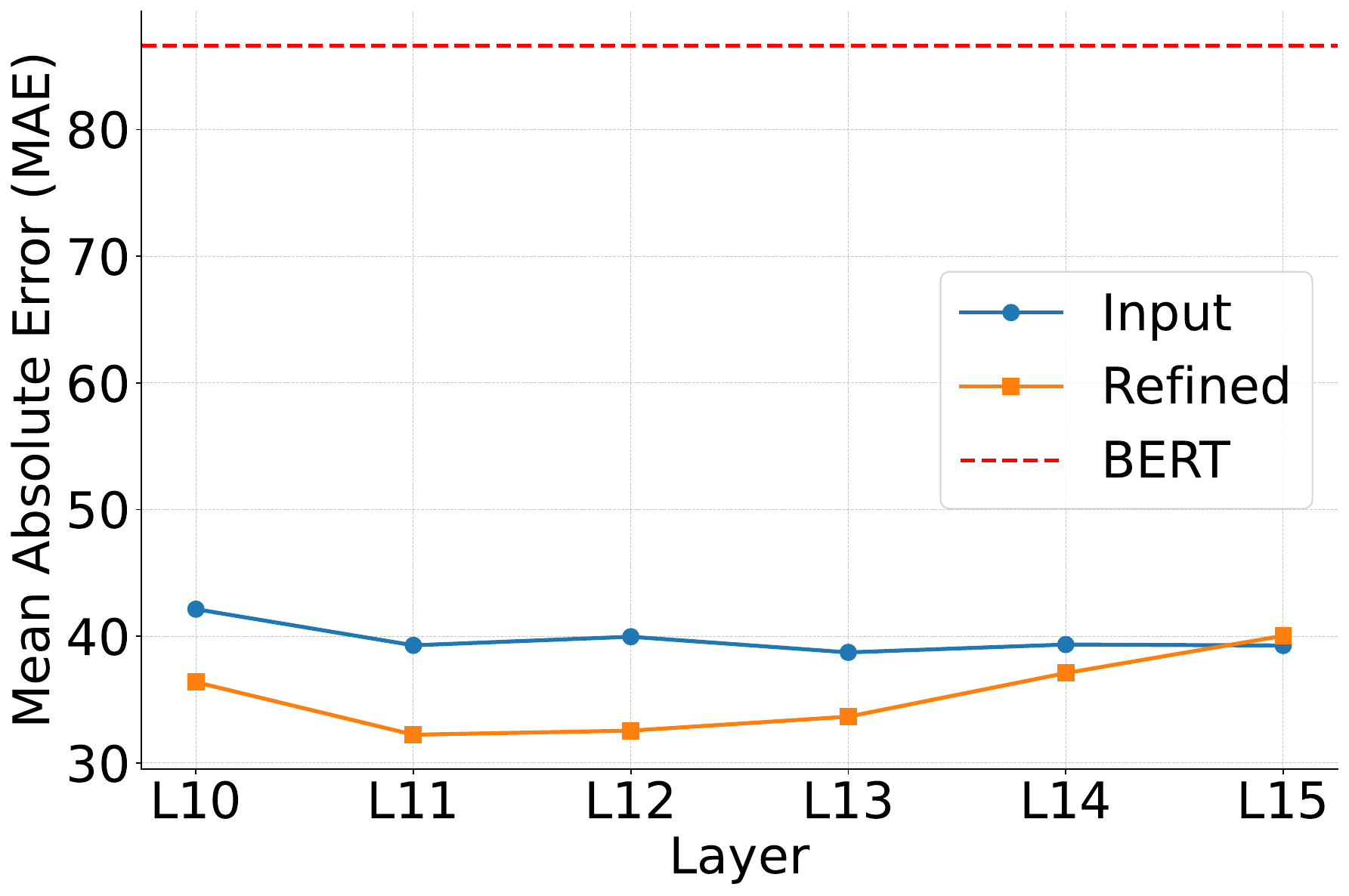}
        \caption{Mean Absolute Error for the predicted length, comparing BERT input embedding (dashed red), average token embedding without refinement (blue), and with refinement (orange), for different layers.}
        \label{fig:mae_refinement}
    \end{minipage}%
    \hfill
    \begin{minipage}{0.4\textwidth}
        \centering
        \small 
        \begin{tabular}{|c|c|c|c|}
            \hline
            Device & Batch & Mean ($\mu s$) & Std ($\mu s$) \\
            \hline
            CPU   & 512   & 9.43  & 3.75 \\
            CPU   & 1024  & 6.19  & 1.46 \\
            CPU   & 2048  & 5.94  & 1.09 \\
            CUDA  & 512   & 0.615 & 0.093 \\
            CUDA  & 1024  & 0.497 & 0.078 \\
            CUDA  & 2048  & 0.429 & 0.084 \\
            \hline
        \end{tabular}
        \captionof{table}{Mean and standard deviation of inference time per sample (TPS) for CPU and CUDA with different batch sizes, in microseconds ($\mu s$).}
        \label{tab:predictions_overhead}
    \end{minipage}
\end{figure}

\subsection{Computing Output Length Prediction Per Iteration}
There are two approaches to computing predictions based on the embedding extracted from intermediate layer(s) (in our case, layer 11) of the LLM.
In the first approach, we can compute the prediction directly on the GPU using the embedding available at the selected layer. While this method avoids data transfer, it introduces a slowdown in the decoding phase, as both the prediction computation and the model’s forward pass share the same GPU.
In the second approach, we can compute the prediction on the CPU. For this, we extract the embedding from layer 11 and transfer it to the CPU. The prediction is then computed in parallel while the LLM processes the remaining layers (in our case, layers 12-32) on the GPU.


We note that whether we implement the length prediction on the GPU or the CPU, the overhead in terms of computational cost is minimal. Specifically, the size of the MLP that we train for the length prediction has around $2.1$ \emph{million} parameters, while the overall Llama model we use has 8 \emph{billion} parameters. Since the computation (number of FLOPs) per token is roughly scaled with the number of parameters, the overhead of the length prediction is around $0.03 \%$. This overhead becomes even more negligible for larger model sizes or when using a smaller MLP for length prediction. Table~\ref{tab:predictions_overhead} shows the mean inference time for length prediction of the two approaches.

\subsection{Scheduling Policy}
\label{sec:scheduling}

As background, traditional job\footnote{The terms job and request are used interchangeably.} scheduling typically assumes that job completion times are either completely unknown, in which case FCFS is a natural strategy, or completely known in advance, allowing strategies such as shortest job first (SJF) to minimize average wait time. In many practical systems, exact job completion times are unknown.
Several studies have explored queueing systems with predicted, rather than exact, service times, generally to minimize the average time a job spends in the system. Building on previous works,~\citep{mitzenmacher2019scheduling, shahout2024skippredict} analyzed the shortest predicted job first (SPJF) and the shortest predicted remaining processing time (SPRPT) policies in the context of a single server where only one job executes at a time. SPRPT tracks the predicted remaining processing time for each job, allowing preemption if a new job arrives with a shorter predicted remaining service time. 

Such previous work, while motivating using predictions, does not take into account the challenge in LLM systems that preemptions introduce memory overhead, particularly in managing the key value (KV) cache (as explained in Section~\ref{sec:background_motivation}). While preemptions can reduce response time, they increase memory consumption, which may ultimately degrade performance due to memory constraints.
This is because, in LLM systems, when memory is full, we either discard jobs from memory and recompute them once memory is available, or we swap KV cache entries from the GPU to the CPU. Both approaches impact response time, as discarding requires recomputation, and swapping interrupts the forward-pass of the model, causing delays for the entire running batch.
Intuitively, we should limit preemptions to avoid exceeding memory constraints; in particular, we recognize that preemptions made earlier in a request's execution consume fewer resources, while preemptions closer to completion should be avoided.

In \algnospace{}, given the initial prediction $r$ for a job (which we treat as a number corresponding to the middle of its predicted bin), we only allow preemption for the first $\lfloor C\cdot r \rfloor$ iterations for a fixed constant $C$.  This restricts preemption as a job nears completion and takes substantial memory.   

Interestingly, we can analyze a corresponding theoretical model for single-server SPRPT with limited preemptions, which we believe is interesting in its own right.  While this model does not capture the complexity of LLM systems (as it has no notion of memory, and is single-server), we think it also provides insight into the potential of limited preemption.  
Using standard queueing theory assumptions (M/G/1 queuing system with Poisson arrivals of rate $\lambda < 1$ and independent service and prediction times), we can derive a closed-form expression for the mean response time (Lemma~\ref{lem:sprpt_ext}).

Specifically, the processing times for each arriving job are independent and drawn based on the cumulative distribution $F(x)$ with an associated density function $f(x)$.
Predictions, independent over jobs, follow the density function $g(x,r)$, where $x$ is the actual size and $r$ is the predicted size. Thus, $\int_{r=0}^\infty g(x,r) dr = f(x).$
A job is described by a triple $(x, r, a)$, where $x$ is the actual size, $r$ is the predicted size, and $a$ is job age (time spent serving the job). We set threshold $a_0 = C \cdot r$, with $C$ as a tunable parameter. The system allows preemption when $a < a_0$ and disables it when $a \ge a_0$, balancing early resource use with timely completion.
When $C=1$, the system becomes the same as SPRPT.  

The proof of Lemma~\ref{lem:sprpt_ext} below is presented in Appendix~\ref{appendix:sprpt_proof}. Simulations showing that SPRPT with limited preemption improves memory usage compared to traditional SPRPT are presented in Appendix~\ref{appendix:sprpt_simulation}.
Additionally, in Appendix~\ref{appendix:refined_predictions_theory}, we provide another theoretical model in the same framework for the setting of refined predictions, that may also be of independent interest. 

\begin{restatable}{lemma}{lemmasprptext}
\label{lem:sprpt_ext}
    For SPRPT with limited preemption, where at age $a_0$ the jobs become non-preemptable, the expected mean response time for a job of true size $x$ and predicted size $r$ is
    \begin{align*}
\mathbb{E}[T(x, r)] = &\frac{\lambda \left(\int_{y = 0}^{r} \int_{x_I = 0}^{\infty} x_I^2\cdot  g(x_I, y) dx_I dy + \int_{t = r + a_0}^{\infty} \int_{x_I = t-r}^{\infty} g(x_I, t) \cdot(x_I - (t - r))^2 \cdot dx_I dt\right)}{2(1-\rho'_r)^2} \\
&+ \int_{0}^{a_0} \frac{1}{1-\rho'_{(r-a)^+}} \, da + (x - a_0).
\end{align*}
\vspace{-0.1in}
 where $\rho'_r= \lambda \int_{y = 0}^{r} \int_{x_I = 0}^{\infty} x_I \cdot g(x_I,y) dx_I dy$.
\end{restatable}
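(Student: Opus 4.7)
The plan is to adapt the classical tagged-job analysis of SRPT in an M/G/1 queue to accommodate both (i) predicted rather than exact sizes, and (ii) the limited-preemption rule $a_0 = C\cdot r$. I would fix a tagged job of true size $x$ and predicted size $r$, and decompose its sojourn time into three additive pieces: an initial waiting delay before its first unit of service, an effective service time while the job is still preemptable ($a \in [0,a_0]$), and a final non-preemptable run ($a \in [a_0, x]$) contributing exactly $x-a_0$. Matching these three pieces to the three summands of the lemma forms the backbone of the proof.

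For the preemptable-service piece I would apply the standard ``rate of progress'' argument. When the tagged job has accumulated age $a$, its predicted remaining time is $(r-a)^+$, so only new arrivals whose own predicted size lies below $(r-a)^+$ leapfrog it, and these accrue work at rate $\rho'_{(r-a)^+}$. Consequently the tagged job's instantaneous progress rate is $1-\rho'_{(r-a)^+}$, and the total duration of the preemptable phase is $\int_0^{a_0} da/(1-\rho'_{(r-a)^+})$, which is the second summand. The $(\cdot)^+$ transparently handles the regime where the job exceeds its prediction, in which case no newly arriving class is ``ahead'' of it and progress proceeds at full rate.

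For the initial waiting delay I would invoke PASTA to compute the mean residual work in the system at the tagged arrival epoch, partitioning it into two classes of blockers, meaning jobs that must clear before the tagged job may run. The first class consists of jobs with predicted size $y\le r$, which hold strictly higher SPRPT priority throughout their entire lifetime; their time-averaged residual contributes the standard second-moment integral $\int_0^r\!\int_0^\infty x_I^2\, g(x_I,y)\,dx_I\,dy$. The second class consists of jobs with predicted size $t$ large enough that they have entered the non-preemptable regime while simultaneously carrying predicted remaining time below $r$: such jobs would ordinarily be leapfrogged by the tagged job but cannot be interrupted, and tracking the age at which a job of type $(x_I,t)$ first enters this ``stuck'' state produces the second integral $\int_{r+a_0}^\infty\!\int_{t-r}^\infty g(x_I,t)(x_I-(t-r))^2\,dx_I\,dt$. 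Summing these two contributions and applying a Pollaczek--Khinchine--style busy-period amplification at the reduced load $\rho'_r$ (the utilization of jobs that can ever block the tagged job) yields the first summand $\lambda[\cdots]/(2(1-\rho'_r)^2)$ of the lemma.

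The main obstacle, and the technical heart of the argument, is the analysis of the second blocker class: identifying, as a joint function of the actual size $x_I$ and predicted size $t$, the age interval over which a large-predicted job is simultaneously non-preemptable and blocking the tagged job, and then integrating its residual work against the joint density $g(x_I,t)$ and the Poisson arrival rate $\lambda$. Once that case analysis is complete the three phases of the sojourn assemble into the stated formula, with the final $(x-a_0)$ term requiring no further calculation.
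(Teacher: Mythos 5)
Your proposal follows essentially the same route as the paper: a tagged-job analysis with the identical three-way decomposition (old-job residual work amplified by busy periods of higher-priority new arrivals at load $\rho'_r$, a preemptable residence phase progressing at rate $1-\rho'_{(r-a)^+}$, and the non-preemptable tail $x-a_0$), together with the same classification of blockers into jobs with predicted size at most $r$ (contributing their full second moment) versus large-prediction jobs that later fall below the tagged job's rank (contributing the residual work $x_I-(t-r)$). The paper simply packages the PASTA/busy-period bookkeeping you describe by instantiating the SOAP framework of Scully and Harchol-Balter (their Theorem 5.5) with the rank function $r-a$ for $a<a_0$ and $-\infty$ thereafter, so the two arguments coincide.
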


\section{Evaluation}

\textbf{Setup and implementation.} Our implementation is based on the open-source vLLM system (v0.5.0), with chunked prefill enabled in both our scheduler and the baseline scheduling methods. We set the out-of-memory mode to discard jobs and recompute them once memory becomes available. The evaluation is conducted on a server with a single NVIDIA A100 80GB GPU and 64 AMD EPYC 7313 16-Core Processor cores, with 503 GiB of memory and running CUDA version 12.3. We use \emph{LLama3-8b-instruct} as a serving model with single-GPU. The workload is generated using the Alpaca dataset~\citep{alpaca}, derived from open-source conversational exchanges and originally used to fine-tune the Alpaca model. We sample 10k unique prompts from the dataset for model serving, distinct from those used to train the length predictor.

\textbf{Benchmark and Metrics.}
The benchmark we use simulates a chatbot serving in a client-server setup, where the server hosts an LLM model, and the client sends requests at a specified request rate. The server runs the vLLM OpenAI API, while the client sends prompts from the Alpaca dataset to mimic a real-world scenario. The benchmark measures the mean and median latency and Time To First Token (TTFT), providing insights into the LLM's responsiveness under varying request loads. In addition to regular load, we evaluate performance under burst conditions.

\textbf{Baselines.} We compare our approach with four baselines: (1) vanilla vLLM~\citep{kwon2023efficient}, using First Come First Served; (2) vLLM with Shortest Job First based on BERT predictions and chunked prefill; (3) \alg{} with refined embedding predictions; and (4) \alg{} with BERT predictions.

\subsection{Predictions Accuracy}

Figure~\ref{fig:mae_refinement} compares the mean absolute error (MAE) of the BERT predictions from the prompts, our linear classifier predictions from the embedding layers, and the refined predictions after using Bayesian inference on the different layers choices.
We further evaluated the accuracy of this refined prediction from layer 11 against the ground truth and compared it to BERT predictions. A heatmap was used to compare the ground truth remaining length with the predicted remaining length across multiple sequences. Both the x-axis (ground truth) and y-axis (predictions) were divided into ten bins, representing the remaining lengths as the sequence progresses: Where the $i$-th bin $b_i$ covers the range $\left[\frac{512i}{10}, \frac{512(i+1)}{10}\right)$.
Each request is counted multiple times in the heatmap, once for each prediction made during its execution. Each cell in the heatmap contains the logarithm of the number of occurrences for a given pair of ground truth and predicted length at different stages. For BERT, where only one prediction is made, we subtract one from the initial predicted length after each generated token to create a comparable heatmap.
As shown in Figure~\ref{fig:prediction_heatmap}, the refined predictions from layer-11 embedding are more accurate than the BERT predictions, as they exhibit higher values along the diagonal and lower values off the diagonal (indicating greater prediction accuracy). Meanwhile, the BERT predictions show lower values along the diagonal and higher values off the diagonal, reflecting larger differences between predicted and actual lengths.

\begin{figure}[htbp]
    \centering
    \begin{subfigure}{0.35\textwidth}
        \centering
        \vspace{0pt} 
        \includegraphics[width=\textwidth]{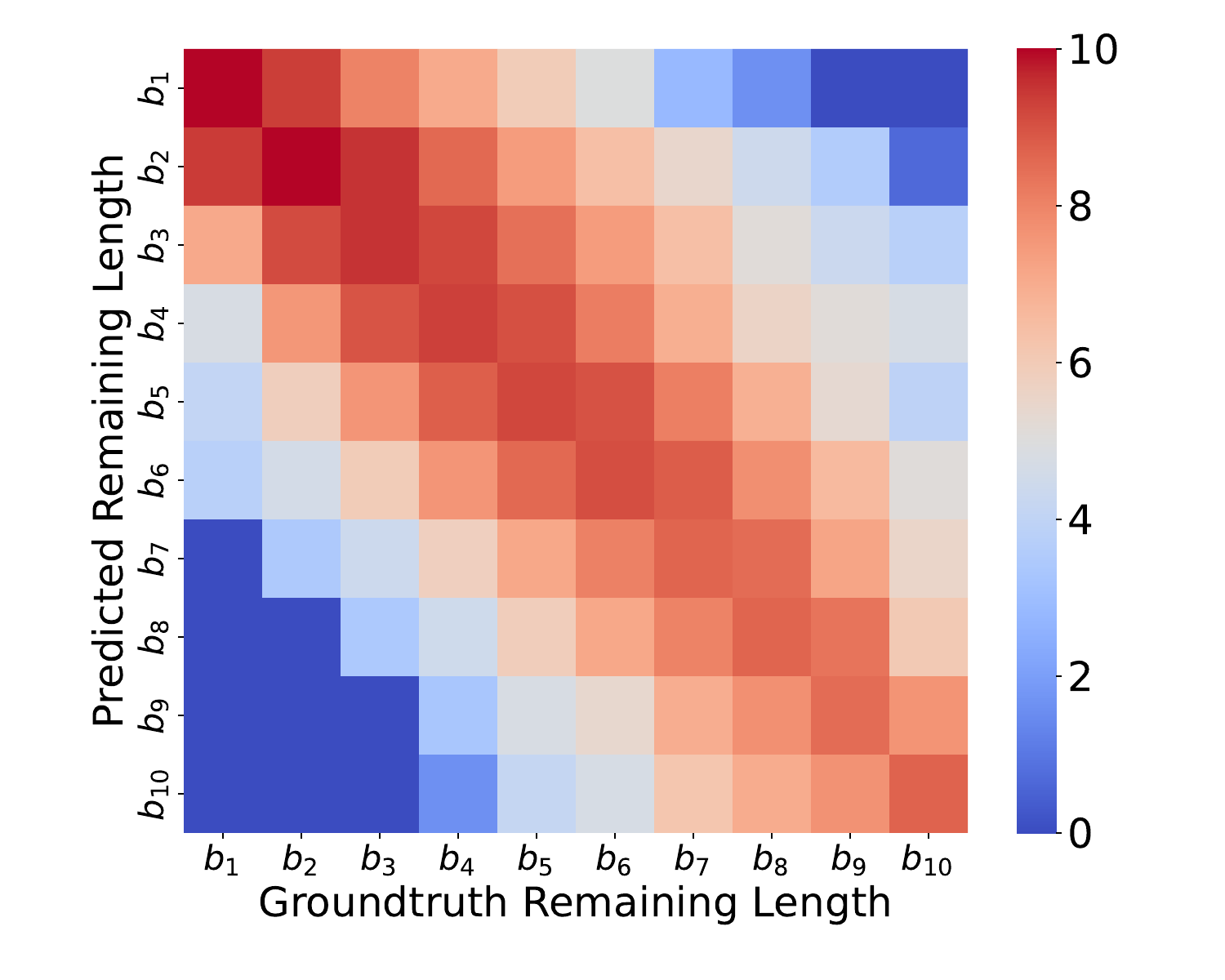}
        \caption{embedding-based predictions}
        \label{fige:embedding_heatmap}
    \end{subfigure}
    \hfill
    \begin{subfigure}{0.35\textwidth}
        \centering
        \vspace{0pt} 
        \includegraphics[width=\textwidth]{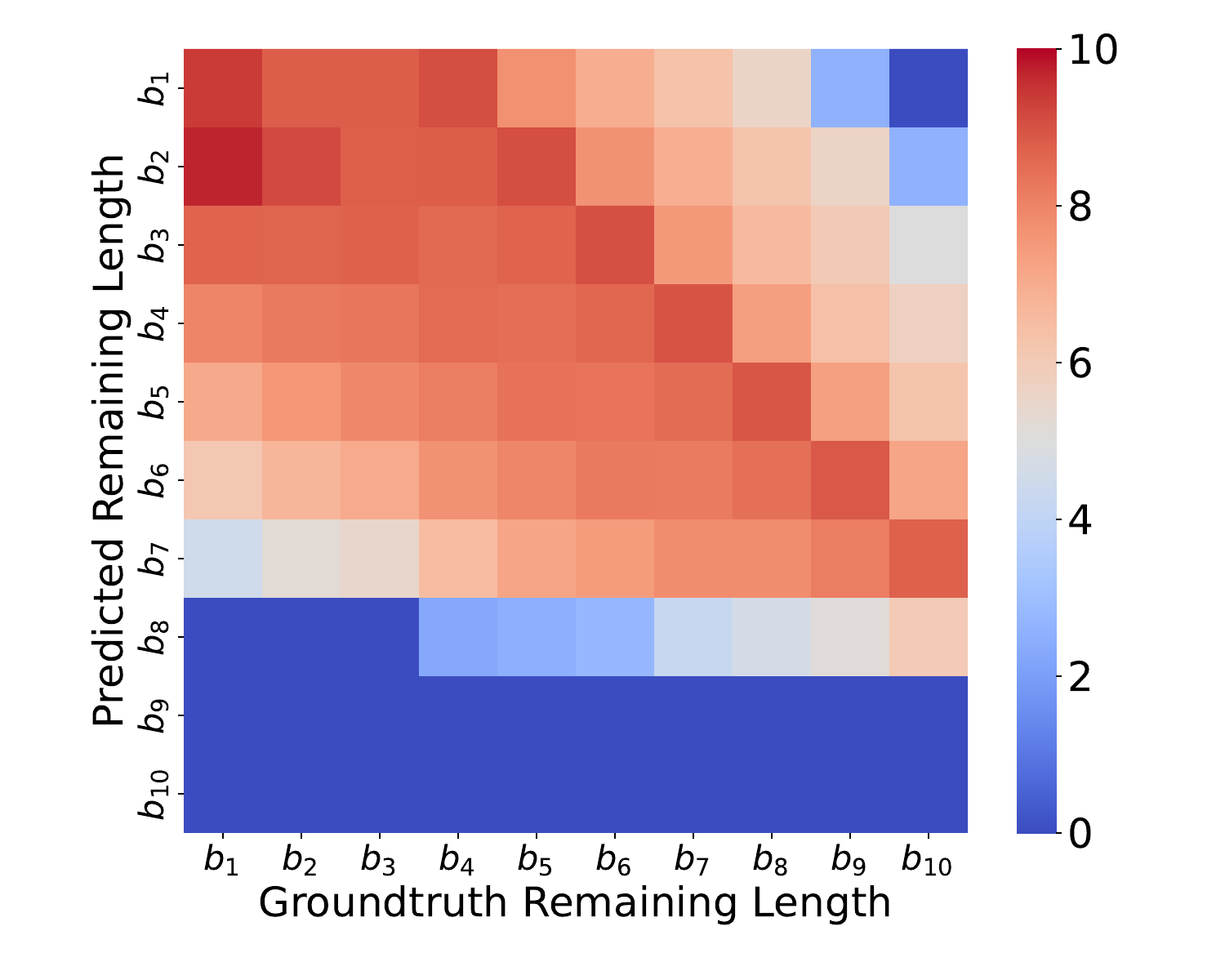}
        \caption{BERT predictions}
        \label{fige:bert_heatmap}
    \end{subfigure}
    \caption{Log-scaled comparison of ground-truth vs predicted lengths bins. The $i$-th bin $b_i$ covers the range $\left[\frac{512i}{10}, \frac{512(i+1)}{10}\right)$.}
    \label{fig:prediction_heatmap}
\end{figure}

\subsection{LLM Serving}
We begin by comparing the mean latency and TTFT of \alg{} across different values of $c$ ($c = 0.5, 0.8, 1$) while setting the request rate to 14. ($c = 1$ mimics SRPT without limiting preemption.)
Figure~\ref{fig:trail_cs} shows that when $c=1$, \alg{} exhibits higher mean latency and TTFT, while $c=0.8$ results in the lowest values, closely followed by $c=0.5$. We also evaluated the system with $c=0.2$, which produced higher latency and TTFT compared to the larger $c$ values (though not shown here to focus more closely on this range of $c$ values). These results highlight that while preemption in scheduling benefits LLM systems, it should be limited due to the additional memory overhead it introduces, which can affect serving efficiency. Based on this, we set the default $c$ value for \alg{} to $0.8$ in the remainder of the evaluation.
The optimal $c$, however, may vary depending on system memory and workload, as it affects the trade-off between preemption benefits and memory overhead.


\begin{figure}[htbp]
    \centering
    \begin{subfigure}{0.35\textwidth}
        \centering
        \includegraphics[width=\textwidth]{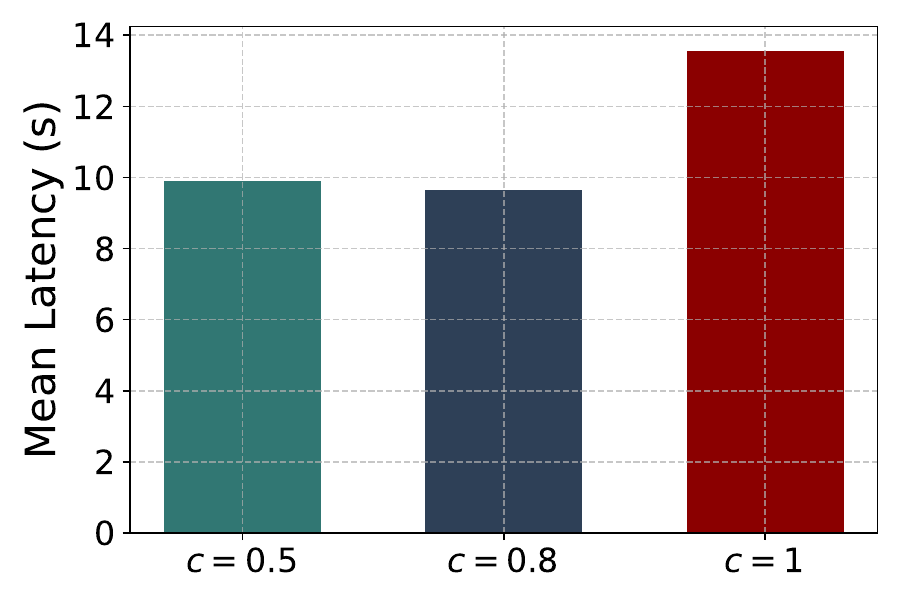}
        \caption{Mean Latency (s)}
    \end{subfigure}
    \hfill
    \begin{subfigure}{0.35\textwidth}
        \centering
        \includegraphics[width=\textwidth]{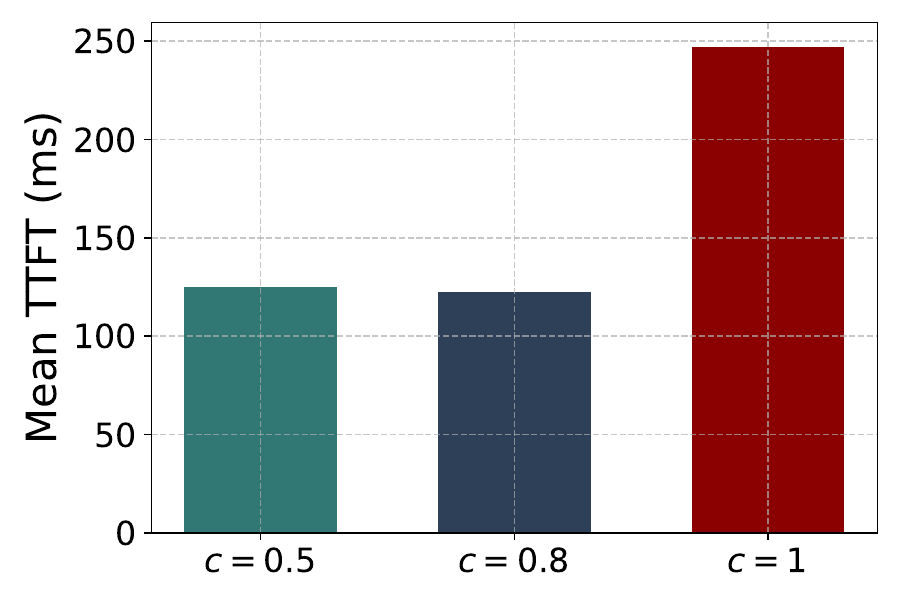}
        \caption{Mean TTFT (ms)}
    \end{subfigure}
    \caption{Comparison of mean latency and TTFT across different values of $c$ ($c = 0.5, 0.8, 1$) at a request rate of 14.}
    \label{fig:trail_cs}
\end{figure}

Figure~\ref{fig:latency_arrival} presents the mean and median latency, along with TTFT, as a function of the request rate for four LLM serving systems: (1) vanilla vLLM~\citep{kwon2023efficient}, which uses First Come First Served (FCFS) as the scheduling policy (vLLM-FCFS); (2) vLLM with a Shortest Job First (SJF) scheduling policy for newly scheduled sequences based on BERT predictions from sequence prompts (vLLM-SJF\_BERT), which also prioritizes new sequences and implements chunked prefill; (3) \alg{} with $c=0.8$ using refined predictions from embeddings with chunked prefill enabled; and (4) \alg{} with $c=0.8$ using BERT predictions (\alg{}-BERT). The comparison between \alg{}-BERT and \alg{} evaluates the effectiveness of embedding-based predictions, while the comparison between vLLM (with both scheduling policies) and \alg{} (with both prediction methods) evaluates the impact of limited preemption in LLM systems. The figure shows that SJF based on BERT predictions provides minimal benefits over vLLM-FCFS, as vLLM implementation prioritizes incoming requests over existing running requests. Additionally, both \alg{} and \alg{}-BERT, which implement limited preemption, achieve lower mean and median latency (Figures~\ref{fig:mean_latency},\ref{fig:median_latency}) and lower mean and median TTFT (Figures~\ref{fig:mean_ttft},\ref{fig:median_ttft}) compared to vLLM, with \alg{} using refined embedding predictions showing the lowest latency and TTFT due to more accurate predictions (Figures~\ref{fig:mae_refinement},\ref{fig:prediction_heatmap}).

\begin{figure}[htbp]
    \centering
    \begin{subfigure}{0.35\textwidth}
        \centering
        \includegraphics[width=\textwidth]{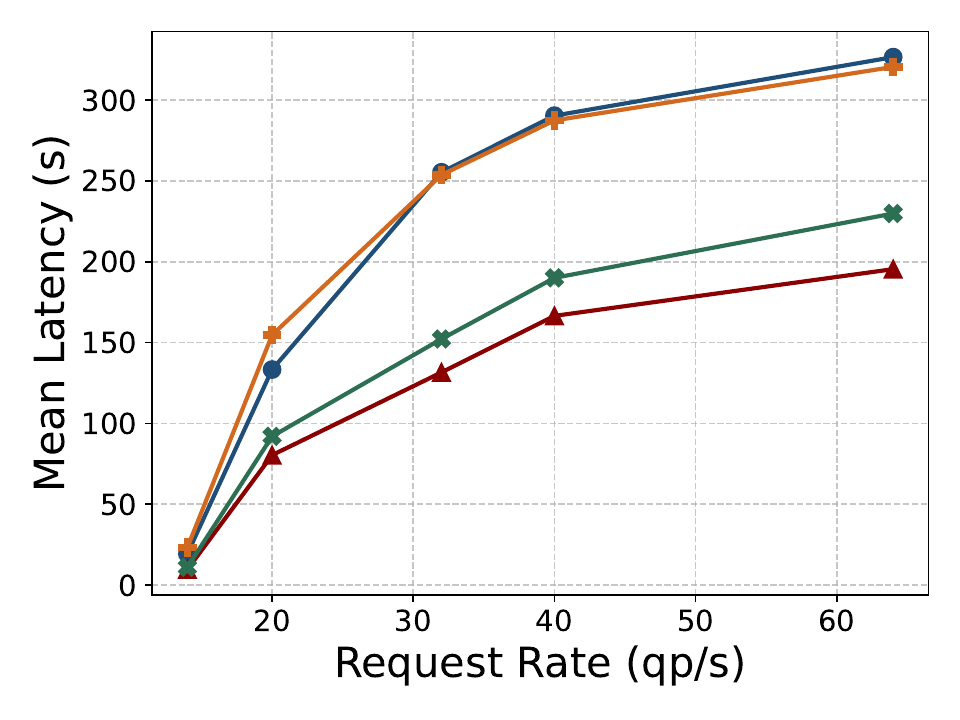}
        \caption{Mean Latency}
        \label{fig:mean_latency}
    \end{subfigure}
    \hfill
    \begin{subfigure}{0.35\textwidth}
        \centering
        \includegraphics[width=\textwidth]{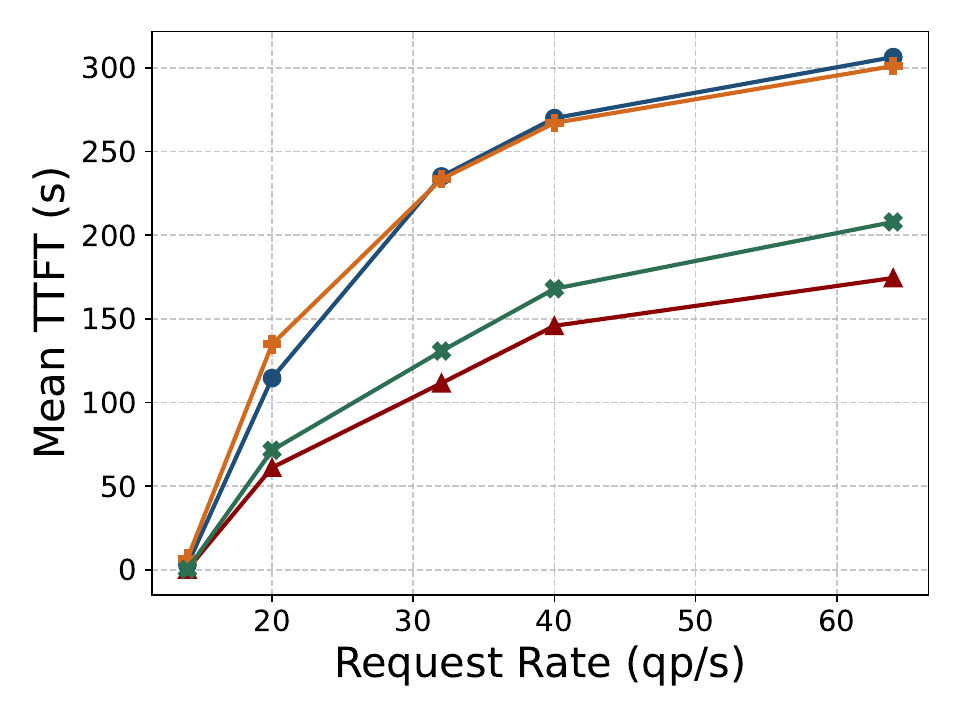}
        \caption{Mean TTFT}
        \label{fig:mean_ttft}
    \end{subfigure}

    \begin{subfigure}{0.9\textwidth}  
        \centering
        \includegraphics[width=\textwidth]{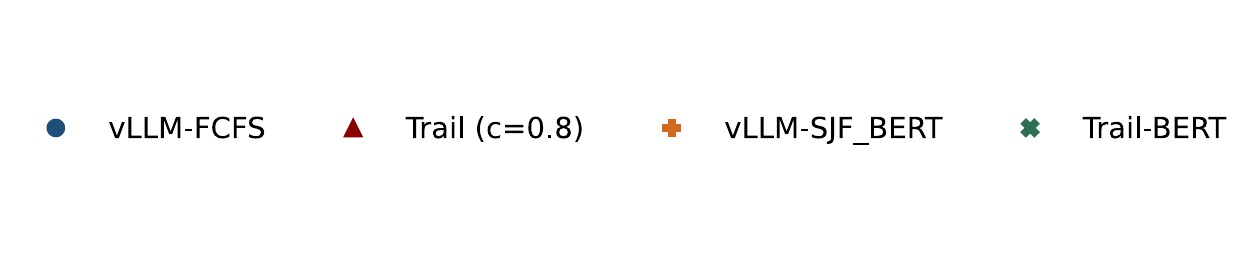}  
    \end{subfigure}

    
    \begin{subfigure}{0.35\textwidth}
        \centering
        \includegraphics[width=\textwidth]{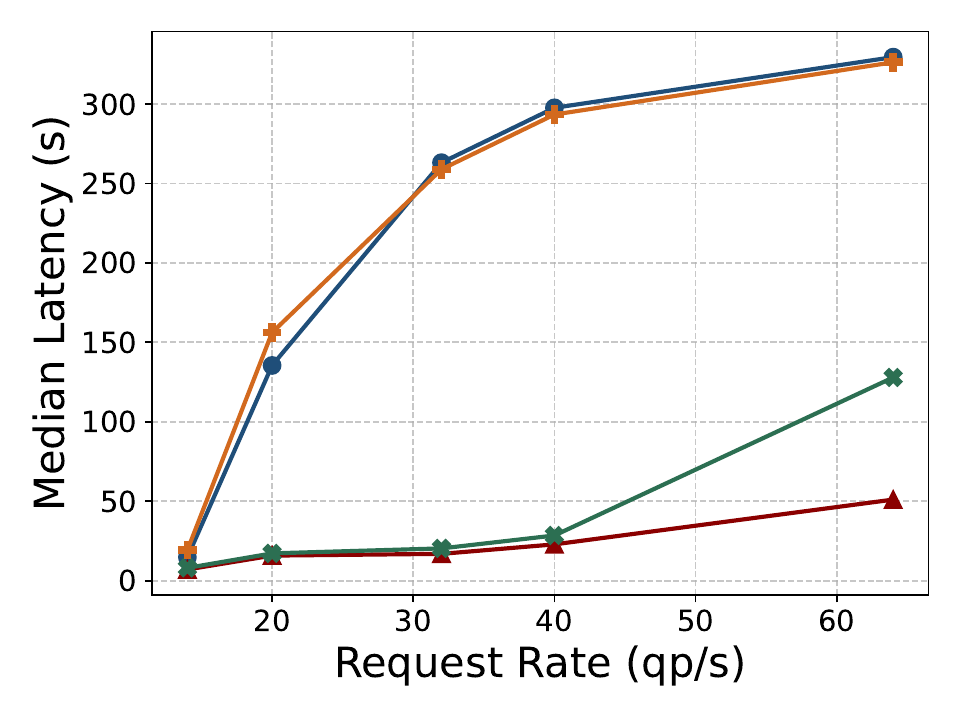}
        \caption{Median Latency}
        \label{fig:median_latency}
    \end{subfigure}
    \hfill
    \begin{subfigure}{0.35\textwidth}
        \centering
        \includegraphics[width=\textwidth]{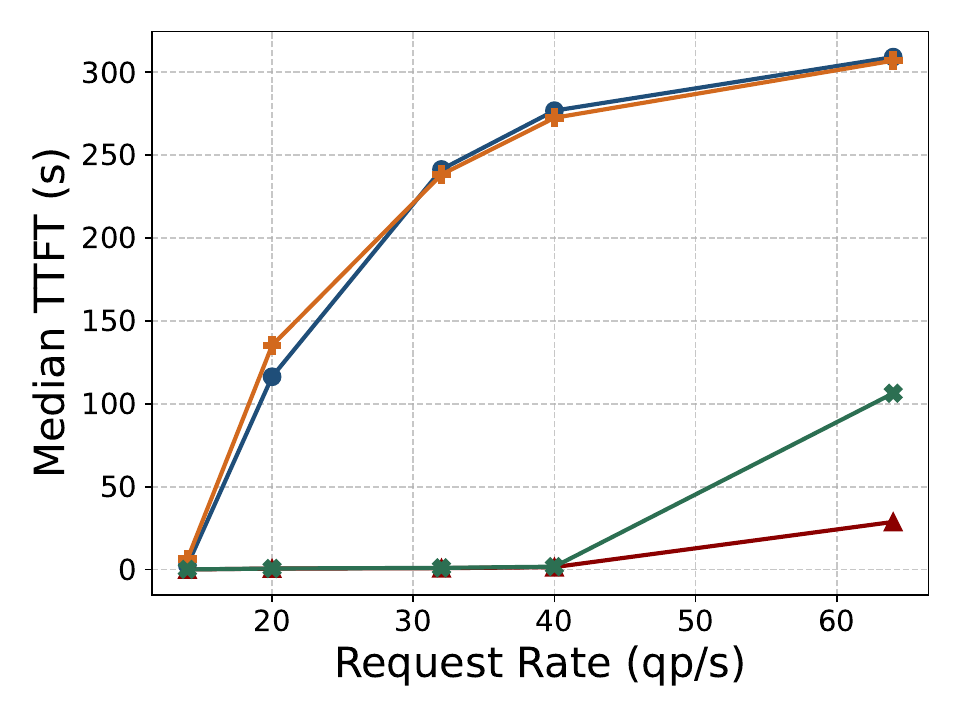}
        \caption{Median TTFT}
        \label{fig:median_ttft}
    \end{subfigure}
    
    \caption{Mean and median latency, along with TTFT, as a function of request rate for four LLM systems: (1) vLLM-FCFS, vLLM using FCFS; (2) vLLM-SJF\_BERT, vLLM using SJF based on BERT; (3) \alg{} with $c=0.8$ and refined embedding predictions; and (4) \alg{}-BERT with $c=0.8$ using BERT predictions.}
    \label{fig:latency_arrival}
\end{figure}


Figure~\ref{fig:burst} shows the mean and median latency and TTFT for vLLM, vLLM-SJF\_BERT, and \alg{} with two different $c$ values ($c = 0.8$ and $c = 1$) under a burst scenario. In this scenario, all requests arrive within a very short time interval at the beginning of the experiment, simulating a sudden spike in demand. The results show that \alg{} continues to offer benefits with lower latency and TTFT, as our implementation ranks all requests (running and waiting) based on length predictions and prioritizes them accordingly, while vLLM prioritizes new requests over existing running ones. However, since no new requests arrive during processing, preemption has no advantage, leading to similar performance between \alg{} with $c = 0.8$ and $c = 1$.

\begin{figure}[htbp]
    \centering
    \begin{subfigure}{0.23\textwidth}
        \centering
        \includegraphics[width=\textwidth]{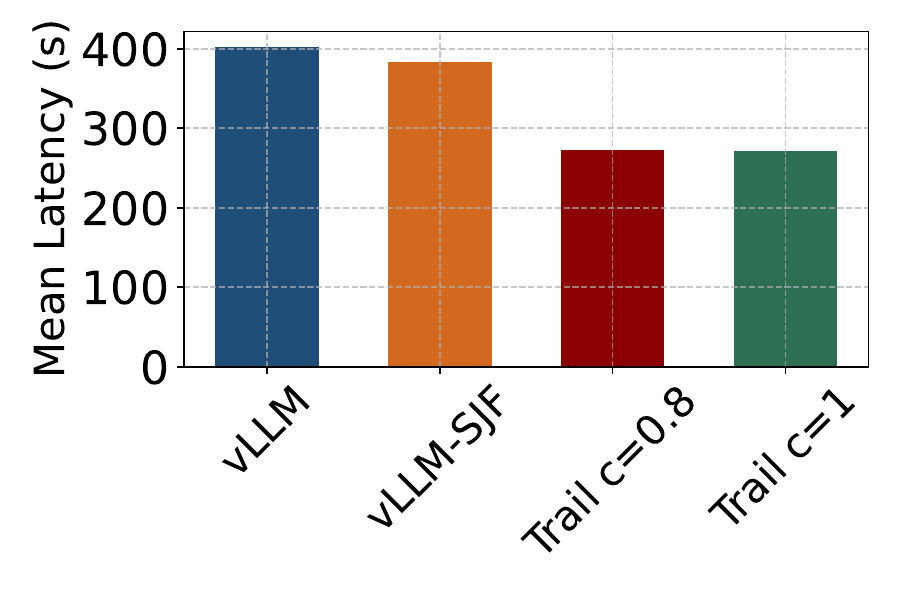}
        \caption{Mean Latency (s)}
    \end{subfigure}
    \hfill
    \begin{subfigure}{0.23\textwidth}
        \centering
        \includegraphics[width=\textwidth]{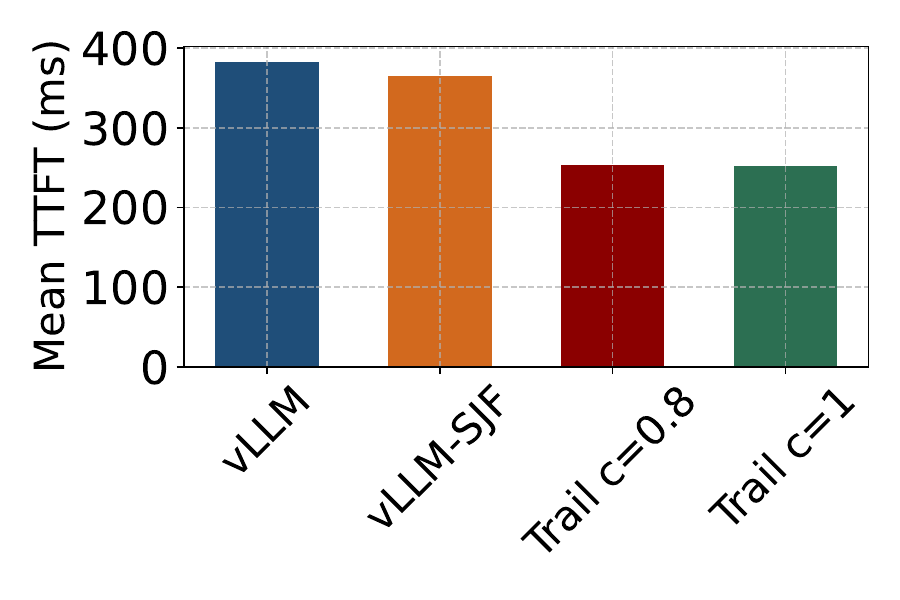}
        \caption{Mean TTFT (ms)}
    \end{subfigure}
    \hfill
    \begin{subfigure}{0.23\textwidth}
        \centering
        \includegraphics[width=\textwidth]{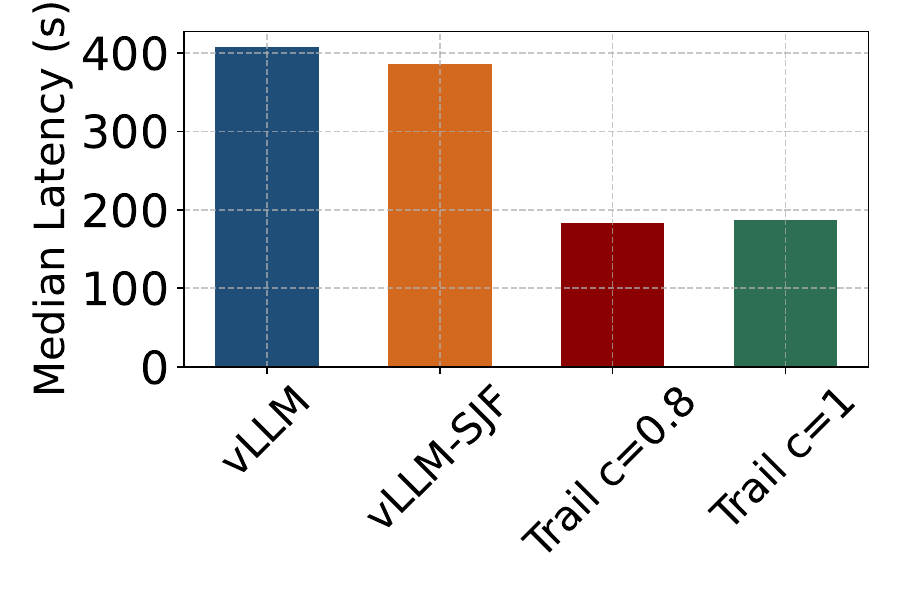}
        \caption{Median Latency (s)}
    \end{subfigure}
    \hfill
    \begin{subfigure}{0.23\textwidth}
        \centering
        \includegraphics[width=\textwidth]{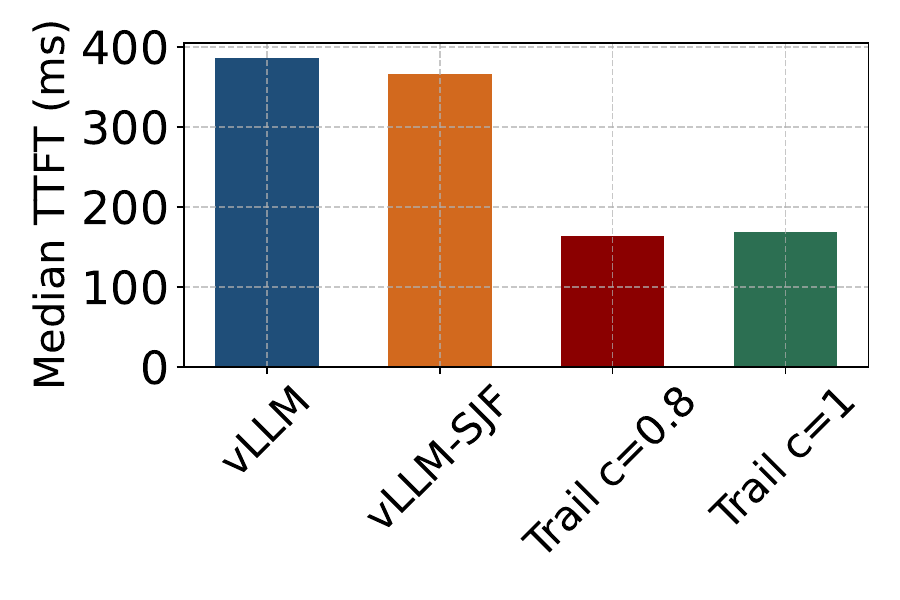}
        \caption{Median TTFT (ms)}
    \end{subfigure}
    
    \caption{Mean, Median of Latency and TTFT when we have burst of requests}
    \label{fig:burst}
\end{figure}

\section{Related Works}

Recent efforts to optimize the serving of large language models (LLMs) have been explored. The ORCA framework~\citep{yu2022orca} introduced token-level scheduling, assuming that batches are processed at the token level rather than the sequence level. 
vLLM~\citep{kwon2023efficient} applies PagedAttention to reduce memory overhead. While these designs are effective, they rely on first-come, first-served (FCFS) scheduling, which could face head-of-line blocking.

In traditional scheduling, methods like shortest job first (SJF) and shortest remaining processing time (SRPT) aim to minimize response times, while multi-level feedback queues (MLFQ) adjust job priorities dynamically without prior knowledge of job sizes.
A promising direction for LLM scheduling is based on output length prediction. FastServe~\citep{wu2023fast} builds on ORCA by scheduling each output token individually, using MLFQ to avoid head-of-line blocking. However, this leads to frequent preemptions, which increase the cost of managing KV cache memory and offloading to the CPU.
Several works have proposed prediction-based scheduling to address these challenges. Zhen et al.~\citep{zheng2024response} enhanced LLM inference by predicting response lengths with an additional LLM model and scheduling them according to length predictions. While this approach optimizes inference, it introduces prediction overhead caused by using additional LLM for length prediction.

Other works predict output lengths using machine learning models such as DistilBERT and OPT, improving resource allocation and reducing memory issues during inference.
The works in \citep{jin2023s, stojkovic2024dynamollm, cheng2024enabling} approach length prediction as a classification problem, while other works \citep{qiu2024efficient, qiu2024power} adopt regression-based techniques.
Our approach extends this line of research by predicting output length from the LLM's embedding, limiting preemption to improve scheduling efficiency further.

\section{Limitations and Conclusion}

We have presented \alg{}, a novel approach that significantly improves response time in LLM inference. Our method is built on two key contributions. First, we obtained low-overhead, high-accuracy predictions from the target LLM itself. After generating each output token, we recycle the embedding of its internal structure as input for a lightweight classifier. This classifier predicts the remaining length for each running request. We demonstrated the high accuracy of this approach and its low overhead compared to existing prediction methods that rely solely on sequence prompts. Specifically, our refined predictions from layer embeddings achieve 2.66x lower mean absolute error compared to BERT predictions from sequence prompts. Second, utilizing these predictions for output lengths, we proposed a Shortest Remaining Processing Time (SRPT) variant with limited preemption. This variant is specifically designed to account for memory overhead in LLM systems. We derived a closed-form formula for this SRPT variant in an M/G/1 queue model and simulated it to demonstrate its potential.
By integrating \alg{} with vLLM, we have demonstrated improvements in overall latency and time-to-first-token metrics using real-world datasets. 
Our experimental results show that \alg{} achieves 1.66x to 2.01x lower mean latency and 1.76x to 24.07x lower mean time to first token compared to vLLM tested using the Alpaca dataset.

\textbf{Limitations.} One key challenge is to select the most informative layer embedding to use as input for the linear classifier. Due to time and resource constraints, we focused only on the Alpaca dataset, as profiling embeddings across all layers is computationally demanding. There are several promising directions for future research. We plan to explore the relationship between layer selection and prediction accuracy across multiple datasets. Another direction is leveraging multiple-layer embeddings through weighted averaging to enhance prediction accuracy. Additionally, experimenting with logarithmic bin sizes for the linear classifier could offer further benefits. A potential optimization is to compute embedding predictions at specific intervals rather than every iteration, reducing computational overhead. Lastly, to address data drifts and maintain performance over time, we aim to explore the dynamic retraining of the linear classifier.

\ifuseICLR
\else
\section*{Acknowledgments}
We thank Sham Kakade for providing access to the Kempner cluster for evaluation. We also thank Yonatan Belinkov for the helpful discussions. Rana Shahout was supported in part by the Schmidt Futures Initiative and Zuckerman Institute. Michael Mitzenmacher was supported in part by NSF grants CCF-2101140, CNS-2107078, and DMS-2023528.
\fi

\newpage

\bibliography{iclr2021_conference}
\bibliographystyle{iclr2021_conference}

\appendix
\appendix
\addcontentsline{toc}{section}{Appendix} 
\part{Appendix} 
\parttoc 

\section{Bayesian Inference Transition matrix}
\label{appendix:matrix_T}

The transition matrix captures transitions between neighboring bins based on the assumption of uniform distribution within each bin. The diagonal and sub-diagonal entries reflect the probabilities of remaining in the same bin or moving to the adjacent lower bin, respectively. Below is the explicit form of the transition matrix for a general number of bins.

\[
T = \begin{bmatrix}
1 - \frac{1}{\text{\tiny bin size}} & 0 & 0 & \dots & 0 \\
\frac{1}{\text{\tiny bin size}} & 1 - \frac{1}{\text{\tiny bin size}} & 0 & \dots & 0 \\
0 & \frac{1}{\text{\tiny bin size}} & 1 - \frac{1}{\text{\tiny bin size}} & \dots & 0 \\
\vdots & \vdots & \vdots & \ddots & \vdots \\
0 & 0 & 0 & \dots & 1 - \frac{1}{\text{\tiny bin size}}
\end{bmatrix}
\]

\section{SOAP Analysis}

We utilize the SOAP framework~\citep{scully2018soap} to derive precise expressions for the mean response time. SOAP also provides the Laplace-Stieltjes transform of the response time distribution, but we focus on mean response time for ease of comparison throughout the paper.

The SOAP framework is designed to analyze scheduling policies for M/G/1 queues that can be characterized by rank functions. Recent work by Scully and Harchol-Balter \citep{scully2018soap} has categorized a broad range of scheduling policies as belonging to the SOAP class. These policies determine job scheduling based on ranks, always prioritizing the job with the lowest rank. (If multiple jobs share the same rank, First Come First Served is used as the tiebreaker.) The rank function itself is a key component, assigned to each job based on certain static properties, typically referred to as the job’s type or descriptor. For instance, the descriptor could indicate the job’s class in multi-class models, as well as static characteristics like its service time (job size). The rank may also depend on the job’s age, or the time it has already been served.

SOAP policies assume that a job's rank is influenced only by its inherent characteristics and its age, aligning well with the model and scheduling algorithm used in \alg{}. For more detailed information, the reader is referred to \citep{scully2018soap}.

In SOAP analysis, the tagged-job technique is employed. Specifically, we track a tagged job $J$ with size $x_J$ and descriptor $d_J$, and use $a_J$ to represent the time $J$ has already been served. The mean response time for $J$ is calculated as the sum of its waiting time (the period from its arrival to the start of service) and its residence time (the duration from the start of service to its completion). To calculate the waiting time, SOAP evaluates the delays caused by both old jobs that arrived before $J$ and new jobs that arrive after $J$. A key concept in this analysis is the worst future rank of a job, as job ranks may change over time. The worst future rank for a job with descriptor $d_J$ and age $a_J$ is denoted by $rank_{d_J}^{\text{worst}}(a_J)$. When $a_J=0$, the rank function is represented as $r_{worst} = rank_{d_J}^{\text{worst}}(0)$.

In the SOAP framework, the waiting time is equivalent to the transformed busy period in an M/G/1 queue, where the arrival rate is $\lambda$ and the job size is described by $X^{\text{new}}[rank_{d_J}^{\text{worst}}(a)]$. Where $X^{\text{new}}[rank_{d_J}^{\text{worst}}(a)]$ is a random variable representing the time a newly arrived job will be served until it either completes or its rank exceeds $rank_{d_J}^{\text{worst}}(a)$. The initial workload during this period corresponds to the delays caused by old jobs. To account for the delays from old jobs, SOAP transforms the system by categorizing jobs according to their rank. Old jobs that exceed the rank threshold $r_{worst}$ are classified as \emph{discarded} and are not included in the transformed system. Those with ranks at or below $r_{worst}$, referred to as \emph{original} jobs, are treated as arriving at rate $\lambda$ with a specific size distribution $X_0^{\text{old}}[r_{worst}]$. Where $X_0^{\text{old}}[r_{worst}]$ is a random variable representing the service time of an original job with respect to rank $r_{worst}$. Recycled jobs, which were once above the threshold but have now fallen below, are treated as server vacations of length $X_i^{\text{old}}[r_{worst}]$. Where $X_i^{\text{old}}[r_{worst}]$ is a random variable representing the service time of a recycled job for the $i$-th time with respect to rank $r_{worst}$ for $i \geq 1$. In \alg{}, jobs are recycled only once, so we only consider $X_1^{\text{old}}[r_{worst}]$.

SOAP shows that, due to Poisson arrivals see time averages, the stationary delay caused by old jobs follows the same distribution as the queuing time in the transformed M/G/1/FCFS system, characterized by sparse server vacations where original jobs arrive at rate $\lambda$ and follow the size distribution $X_0^{\text{old}}[r_{worst}]$.

\begin{theorem}[Theorem 5.5 of \citep{scully2018soap}]
\label{soaptheorem}
Under any SOAP policy, the mean response time of a job with descriptor $d$ and size $x$ is:
\begin{align*}
\mathbb{E}[T(x, d)] =  & \frac{\lambda \cdot \sum_{i=0}^{\infty} \mathbb{E}[X_i^{old}[r_{worst}]^2]}{2(1-\lambda \mathbb{E}[X_0^{old}[r_{worst}]) (1-\lambda\mathbb{E}[X^{new}[r_{worst}])} \\
& +\int_{0}^{x} \frac{1}{1-\lambda\mathbb{E}[X^{new}[rank_{d_J}^{\text{worst}} (a)]]} da.
\end{align*}
\end{theorem}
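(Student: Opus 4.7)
The strategy is to instantiate the SOAP framework (Theorem~\ref{soaptheorem} in the appendix) for the policy at hand. The first step is to specify the rank function that encodes SPRPT with the non-preemption cutoff: a job of predicted size $r'$ carries rank $r'-a$ on its preemptable interval $a\in[0,a_0)$ and rank $-\infty$ on $[a_0,\infty)$, the latter being the natural SOAP encoding of ``cannot be interrupted''. Since this rank function is monotonically non-increasing in $a$, the worst future rank of the tagged job $(x,r)$ collapses to $\mathrm{rank}^{\mathrm{worst}}(a)=r-a$ on $[0,a_0)$ and $-\infty$ on $[a_0,x]$, with initial value $r_{\mathrm{worst}}=r$.

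Next I compute the service-time distributions required by Theorem~\ref{soaptheorem}. For a fresh arrival of predicted size $t$ and true size $x_I$, monotonicity of rank gives that $X^{\mathrm{new}}[r]$ equals $x_I$ when $t\le r$ (the arrival's rank stays $\le r$ throughout its life, so it fully preempts the tagged job) and $0$ when $t>r$ (its rank starts strictly above $r$ and the only way to lower it is through being served, which the tagged job's priority prevents). This yields $\lambda\mathbb E[X^{\mathrm{new}}[r]]=\rho'_r$, matching the definition in the statement, and by PASTA applied to the steady-state distribution of ``original'' old jobs, $\mathbb E[X_0^{\mathrm{old}}[r]]=\mathbb E[X^{\mathrm{new}}[r]]$, so the two factors in the denominator of Theorem~\ref{soaptheorem} collapse to the stated $(1-\rho'_r)^2$.

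The second-moment sum $\lambda\sum_i\mathbb E[(X_i^{\mathrm{old}}[r])^2]$ in the numerator is obtained by enumerating the contiguous chunks of rank $\le r$ in the trajectory of a random arrival $(x_I,t)$ under the joint density $g$. Arrivals with $t\le r$ contribute an ``original'' chunk of length $x_I$, giving squared contribution $x_I^2$ integrated against $g$ over $t\in[0,r]$ and $x_I\in[0,\infty)$, which is exactly the first integral of the lemma. Large-prediction arrivals with $t>r+a_0$ contribute a ``recycled'' chunk activated once the arrival crosses the non-preemption boundary, whose squared length together with the feasibility constraint $x_I\ge t-r$ produces the second integral $\int_{r+a_0}^{\infty}\int_{t-r}^{\infty}g(x_I,t)(x_I-(t-r))^2\,dx_I\,dt$.

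Finally, the residence-time integral $\int_0^x da/(1-\lambda\mathbb E[X^{\mathrm{new}}[\mathrm{rank}^{\mathrm{worst}}(a)]])$ splits at the age boundary $a_0$: on $[0,a_0)$ the integrand is $1/(1-\rho'_{(r-a)^+})$ (using $\rho'_s=0$ for $s\le 0$), and on $[a_0,x]$ it equals $1$ since no arrival has rank $\le-\infty$, producing the pure-service tail $(x-a_0)$. Adding the waiting contribution from Theorem~\ref{soaptheorem} to this residence-time contribution yields the claimed closed form. The main technical obstacle is the case analysis in the previous paragraph: carefully reconciling the original/recycled decomposition for a rank function that has both a continuous decreasing piece and a discontinuous jump at $a_0$, and verifying that the intermediate regime $r<t\le r+a_0$ does not introduce an extra term beyond what is already absorbed into the residence-time factor.
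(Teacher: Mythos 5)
The statement you were asked to prove is Theorem 5.5 of Scully and Harchol-Balter's SOAP paper, which this paper imports verbatim as a black box and does not prove; it is the general mean-response-time formula valid for \emph{any} SOAP policy. Your proposal does not prove it either: your very first sentence is ``instantiate the SOAP framework (Theorem~\ref{soaptheorem}),'' i.e., you assume the statement you are supposed to establish and then specialize it. That is circular as a proof of this theorem. A genuine proof would have to carry out the tagged-job analysis underlying SOAP itself: construct the transformed M/G/1/FCFS system with server vacations given by the recycled pieces $X_i^{\mathrm{old}}[r_{\mathrm{worst}}]$ for $i\ge 1$ and original jobs of size $X_0^{\mathrm{old}}[r_{\mathrm{worst}}]$, invoke PASTA to identify the stationary delay due to old jobs with the queueing time in that transformed system, apply a Pollaczek--Khinchine / decomposition argument to get the first (waiting-time) term, and then analyze the residence time as a sequence of rank-$\mathrm{rank}^{\mathrm{worst}}_{d}(a)$ busy periods to get the integral term. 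None of this machinery appears in your write-up, so as a proof of the stated theorem there is a complete gap.

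What you have actually written is, in substance, the paper's own proof of Lemma~\ref{lem:sprpt_ext} (Appendix~\ref{appendix:sprpt_proof}): the rank function with the $-\infty$ cutoff at age $a_0$, the computation of $X^{\mathrm{new}}$, $X_0^{\mathrm{old}}$, and $X_1^{\mathrm{old}}$, and the substitution into Theorem 5.5, match the paper's derivation step for step. Had the target been the Lemma, your proposal would be essentially the same approach as the paper's, and your closing remark about the regime $r < t \le r + a_0$ is a fair observation --- the paper's integral for the recycled second moment starts at $t = r + a_0$ and also leaves that intermediate case implicit. But for the statement actually posed, you need to either prove Theorem 5.5 from first principles or explicitly declare it as a cited external result rather than presenting an application of it as its proof.
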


\section{SPRPT with Limited Preemption}
\label{appendix:sprpt_proof}

We begin by describing the rank function for a job with descriptor $(x, r, a)$, where $x$ is the actual size, $r$ is the predicted size, and a is the job's age. 
The threshold where we stop preemption from occurring is set as $a_0 = C \cdot r$. The rank function is:


\[
rank(x,r,a)  = 
\begin{cases} 
r-a & \text{if } a < a_0 \\
-\infty  & otherwise
\end{cases}
\]


As this rank function is monotonic, A job's worst future rank is its initial prediction:

\begin{equation}
rank_{d, x}^{\text{worst}} (a)  = 
\begin{cases} 
r-a & \text{if } a < a_0 \\
-\infty  & \text{otherwise}
\end{cases}
\label{eq:worst_rank}
\end{equation}

When $a=0$, the rank function is denoted by $r_{worst}= rank_{d, x}^{\text{worst}} (0) = r$.

\lemmasprptext*

\begin{proof}

To analyze SPRPT with limited preemption using SOAP, based on the worst future rank (\eqref{eq:worst_rank}), we calculate $X^{\text{new}}[rank_{d}^{\text{worst}} (a)]$, $X_{0}^{\text{old}}[r_{worst}]$ and $X_{i}^{\text{old}}[r_{worst}]$ for a tagged job $J$ with descriptor $(x, r, a)$.

\subsubsection{$X^{\text{new}}[rank_{d, x}^{\text{worst}} (a)]$ computation:}
Suppose that a new job $K$ of predicted size $r_K$ arrives when $J$ has age $a$.
If $a < a_0$ and $K$ has a predicted job size less than $J$'s predicted remaining process time ($r - a$), $K$ will always outrank $J$. Thus

\[
X_{x_K}^{\text{new}}[r - a] = x_K \mathbf{1}(r_K < r - a) \mathbf{1}(a < a_0)
\]

\begin{align*}
\mathbb{E}[X^{new}[r - a]] = \int_{0}^{r-a} \int_{x_K = 0}^{\infty} x_K \cdot g(x_K,y) dx_K dy \\
\end{align*}

\subsubsection{$X_{0}^{\text{old}}[r_{worst}]$ computation:}
Let $I$ be an old job in the system. Whether job $I$ is an original or recycled job depends on its predicted size relative to J's predicted size.
If $r_I \le r$, then $I$ is original until its completion because its rank never exceeds $r$.

\[
X_{0, x_I}^{\text{old}}[r] = x_I\mathds{1}(r_I \le r).
\]

\begin{align*}
\mathbb{E}[X_{0}^{\text{old}}[r]] = \int_{y = 0}^{r} \int_{x_I = 0}^{\infty} x_I \cdot g(x_I,y) dx_I dy.
\end{align*}

\begin{align*}
\mathbb{E}[(X_{0}^{\text{old}}[r])^2] = \int_{y = 0}^{r} \int_{x_I = 0}^{\infty} x_I^2 \cdot g(x_I, y) dx_I dy.
\end{align*}

\subsubsection{$X_{i}^{\text{old}}[r_{worst}]$ computation:}
If $r_I > r$, then $I$ starts discarded but becomes recycled when $r_I - a = r$. This means at age $a = r_I -r$ and served till completion only if $a<a_0$, which will be $x_I - a_I = x_I - (r_I - r)$:

Thus, we have
\[
X_{1, x_I}^{\text{old}}[r] = x_I - (r_I- r).
\]
For $i \ge 2$, $$X_{i, x_I}^{\text{old}}[r] = 0.$$

\begin{align*}
\mathbb{E}[X_{1}^{\text{old}}[r]^2] = \int_{r_I= r + a_0}^{\infty} \int_{x_I = r_I-r}^{\infty} g(x_I, t) \cdot (x_I - (r_I- r))^2 \cdot dx_I dr_I.
\end{align*}

Applying Theorem 5.5 of SOAP~\citep{scully2018soap} yields that the mean response time of jobs with descriptor ($r$) and size $x$ is as follows.
Let
$$\rho'_r= \lambda \int_{y = 0}^{r} \int_{x_I = 0}^{\infty} x_I \cdot g(x_I,y) dx_I dy.$$
Then

\begin{align*}
\mathbb{E}[T(x, r)] = &\frac{\lambda \left(\int_{y = 0}^{r} \int_{x_I = 0}^{\infty} x_I^2\cdot  g(x_I, y) dx_I dy + \int_{t = r + a_0}^{\infty} \int_{x_I = t-r}^{\infty} g(x_I, t) \cdot(x_I - (t - r))^2 \cdot dx_I dt\right)}{2(1-\rho'_r)^2} \\
&+ \int_{0}^{a_0} \frac{1}{1-\rho'_{(r-a)^+}} \, da + (x - a_0).
\end{align*}

Let $f_p(y) = \int_{x=0}^{\infty} g(x,y) dx$. Then the mean response time for a job with size $x$, and the mean response time of all jobs are given by 
$$\mathbb{E}[T(x)] = \int_{y=0}^{\infty} f_p(y) \mathbb{E}[T(x,y)] dy,$$

$$\mathbb{E}[T] = \int_{x=0}^{\infty} \int_{y=0}^{\infty} g(x,y)  \mathbb{E}[T(x,y)] dy dx.$$

\end{proof}

\section{Simulation of SPRPT with Limited Preemption}
\label{appendix:sprpt_simulation}


In our simulation, memory usage is modeled as proportional to the age of each job: the amount of service it has received so far. This reflects the idea that jobs consume more memory the longer they remain in the system. We measure the response time to evaluate scheduling efficiency along with memory consumption.

We consider a single-queue setting with Poisson arrivals, where the job service time follows an exponential distribution with mean 1 (\(f(x) = e^{-x}\)). Two prediction models are used: 1) Exponential predictions \citep{mitzenmacher2019scheduling}, where the prediction for a job with service time \(x\) is also exponentially distributed with mean \(x\), given by \( g(x, y) = e^{\frac{-x - y}{x}} \). 2) A ``perfect predictor'', where the prediction is always accurate, given by \( g(x, y) = e^{-x} \).

Figure~\ref{fig:simulation_all_figures} shows the mean response time (blue line) and the peak memory usage (green line) under various arrival rates and with different $C$ values. The key takeaway is that limiting preemption; reducing how often jobs are interrupted and rescheduled, can lead to better memory utilization while maintaining a reasonable response time.

\begin{figure}[H]
    \centering
    \begin{subfigure}[b]{0.45\textwidth}
        \centering
        \includegraphics[width=\textwidth]{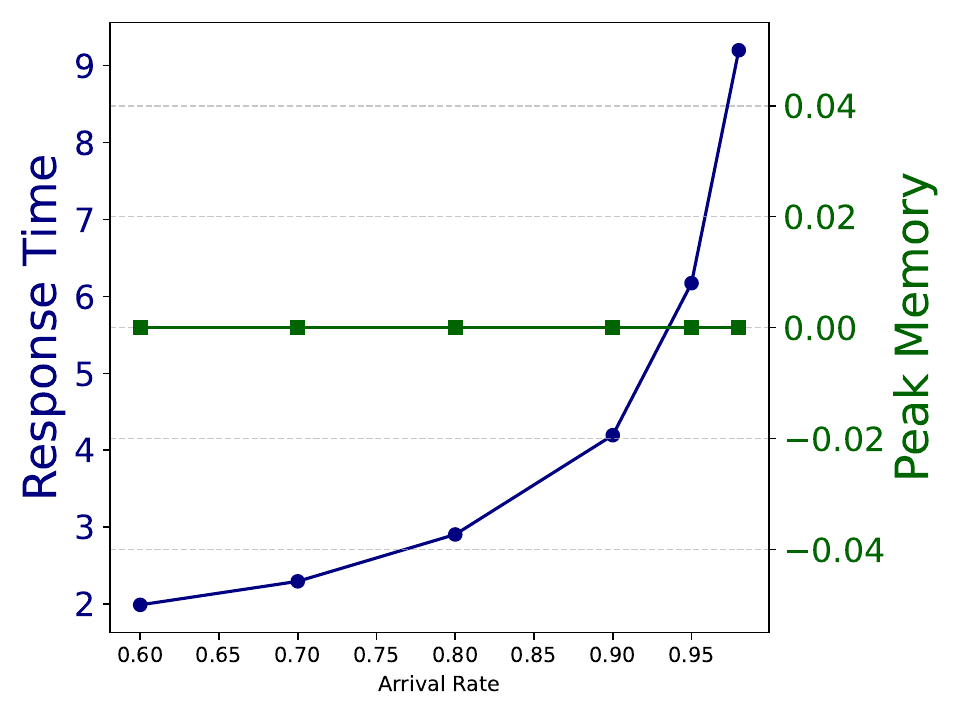}
        \caption{vs. arrival rate when $C=0$}
        \label{fig:fig1}
    \end{subfigure}
    \hfill
    \begin{subfigure}[b]{0.45\textwidth}
        \centering
        \includegraphics[width=\textwidth]{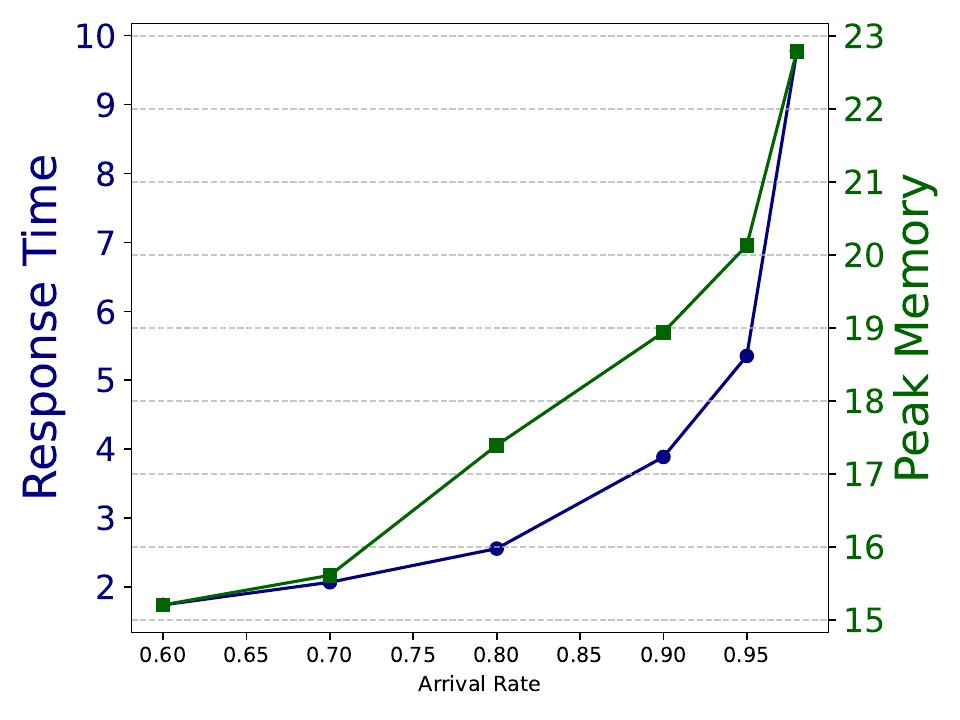}
        \caption{vs. arrival rate when $C=0.3$}
        \label{fig:fig2}
    \end{subfigure}
    \vskip\baselineskip  

    \begin{subfigure}[b]{0.45\textwidth}
        \centering
        \includegraphics[width=\textwidth]{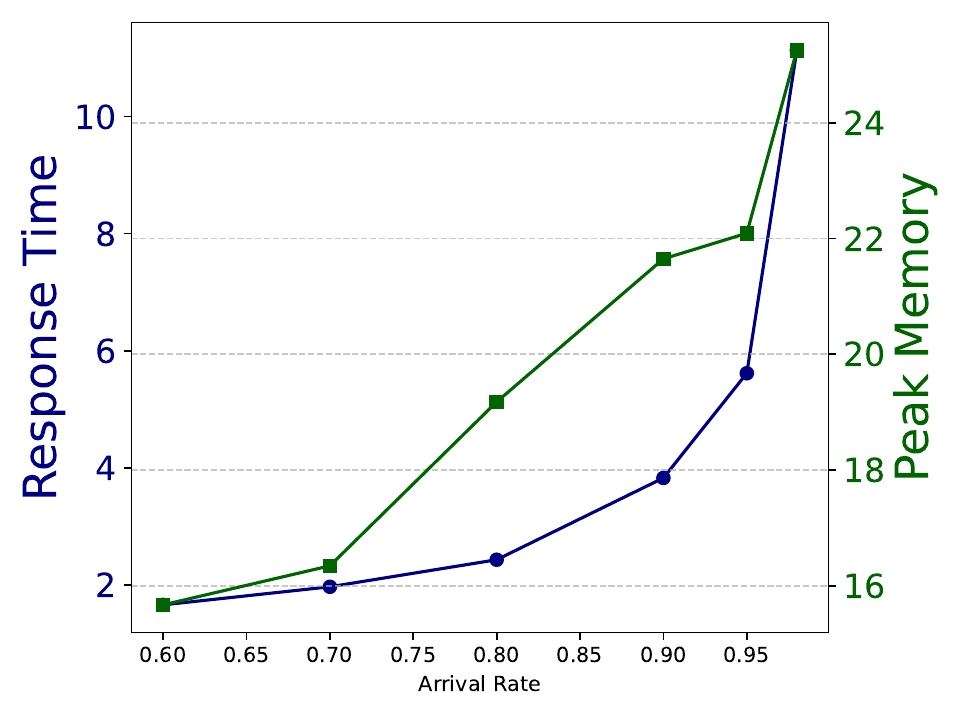}
        \caption{vs. arrival rate when $C=0.5$}
        \label{fig:fig3}
    \end{subfigure}
    \hfill
    \begin{subfigure}[b]{0.45\textwidth}
        \centering
        \includegraphics[width=\textwidth]{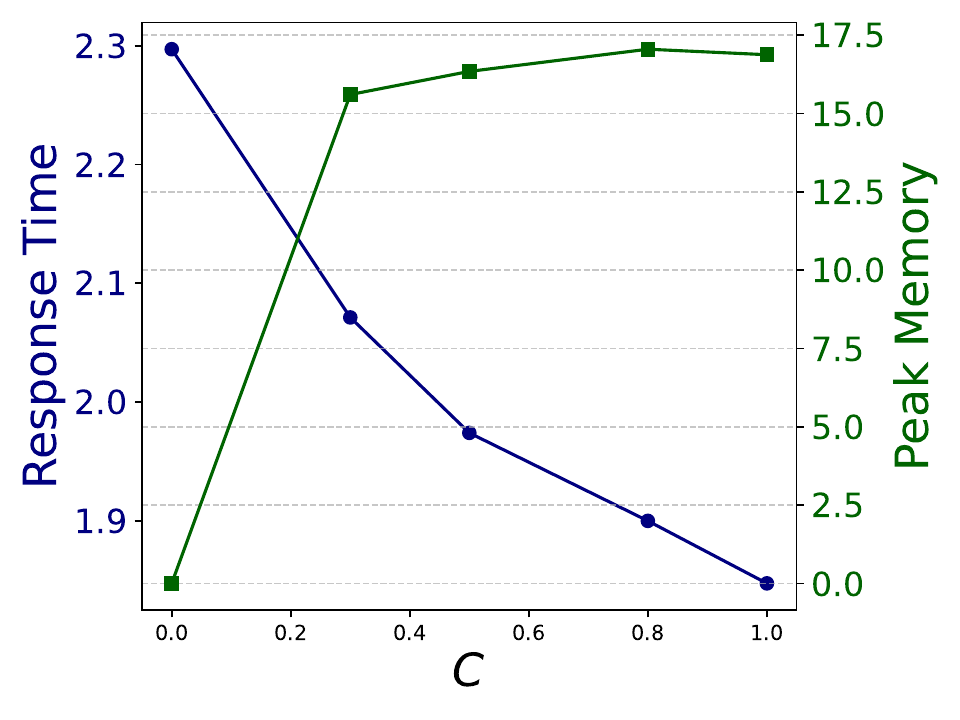}
        \caption{vs. $C$ when $\lambda=0.7$}
        \label{fig:fig4}
    \end{subfigure}
    \vskip\baselineskip  

    \begin{subfigure}[b]{0.45\textwidth}
        \centering
        \includegraphics[width=\textwidth]{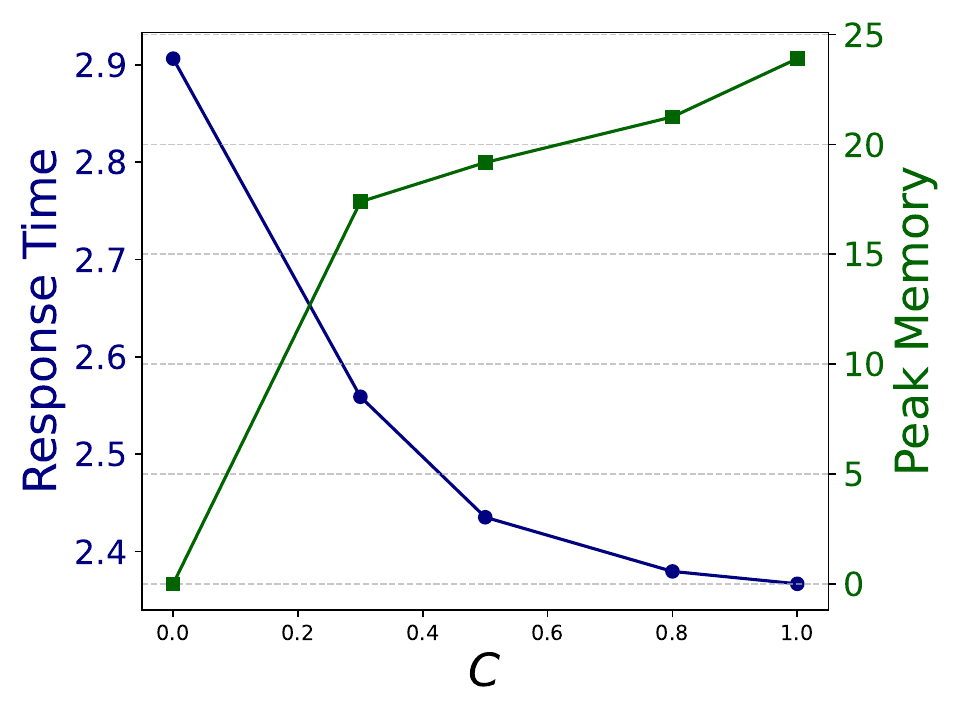}
        \caption{vs. $C$ when $\lambda=0.8$}
        \label{fig:fig5}
    \end{subfigure}
    \hfill
    \begin{subfigure}[b]{0.45\textwidth}
        \centering
        \includegraphics[width=\textwidth]{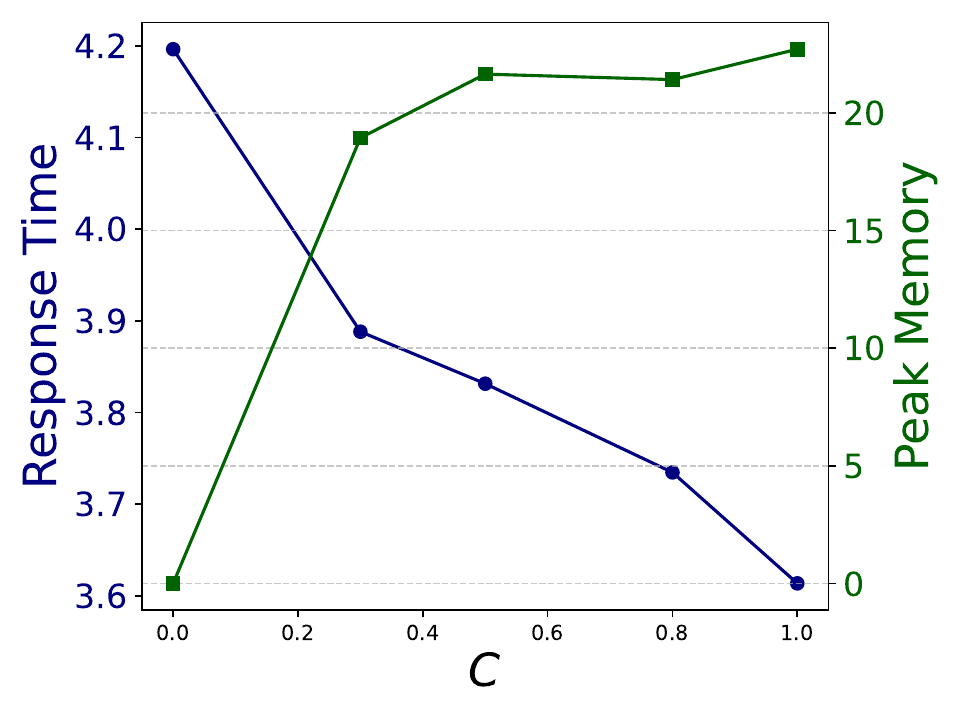}
        \caption{vs. $C$ when $\lambda=0.9$}
        \label{fig:fig6}
    \end{subfigure}
    \caption{Comparing memory usage and response time across different arrival rates and values of $C$.}
    \label{fig:simulation_all_figures}
\end{figure}

\section{Refined Prediction Framework}
\label{appendix:refined_predictions_theory}

We consider M/G/1 queueing systems with arrival rate $\lambda$. The processing times for each arriving job are independent and drawn based on the cumulative distribution $F(x)$, with an associated density function $f(x)$.

Given a job of size $X = x$, and a sequence of refined predictions $Y_0, Y_1, \dots, Y_x$, where $Y_i$ represents the prediction after processing $i$ units of $x$, we model the refined predictions as follows:

The probability of the refined prediction $y_i$ at step $i$, given the actual size $x$ and the entire history of previous predictions, is denoted as:
\[
P(Y_i = y_i \mid X = x, Y_{i-1} = y_{i-1}, Y_{i-2} = y_{i-2}, \dots, Y_0 = y_0)
\]

Under the Markovian assumption, the refined prediction in step $i$ depends only on the actual size $x$ and the previous prediction $y_{i-1}$:
\[
P(Y_i = y_i \mid X = x, Y_{i-1} = y_{i-1}, Y_{i-2} = y_{i-2}, \dots, Y_0 = y_0) = P(Y_i = y_i \mid X = x, Y_{i-1} = y_{i-1})
\]

The joint probability of the sequence of predictions $Y_0, Y_1, \dots, Y_x$ can be modeled as:
\[
P(Y_0 = y_0, Y_1 = y_1, \dots, Y_x = y_x \mid X = x) = P(Y_0 = y_0 \mid X = x) \prod_{i=1}^{x} P(Y_i = y_i \mid X = x, Y_{i-1} = y_{i-1})
\]

The density function $g(x, y)$ corresponding to the list of predictions $y = [y_0, y_1, \dots, y_{x-1}]$, where each $y_i$ is obtained after processing one unit of job $x$, is obtained by summing over all possible refined predictions:
\[
g(x, y) = \sum_{y_0=0}^{\infty} \sum_{y_1=0}^{\infty} \dots \sum_{y_{x-1}=0}^{\infty} P(Y_0 = y_0, Y_1 = y_1, \dots, Y_{x-1} = y_{x-1} \mid X = x)
\]

\subsection{SPRPT with refined predictions}
The relevant attributes are size and the refined prediction list $r$.
We can model the system using descriptor $\mathcal{D} =$ (size, list of refined predictions, age).

Let $J$ be a job with a descriptor $(x, r, a)$, where $x$ is J's size, $r$ is a list of refined predictions for J, each generated after processing one unit of the job, and $a$ is J's age.
Thus, SPRPT with refined predictions has a rank function $rank(x, r, a) = r[a] - a$ for job of size $x$ and predicted size $r[a]$ at time $a$. We denote the maximum rank function as $r_{max}$.


$J$'s worst future rank is:
\[
rank_{d, x}^{\text{worst}}(a)= \sup_{a < b < x} (r[b] - b) = \max_{a < b < x} (r[b] - b)
\]

When $a=0$, the rank function is denoted by $r_{worst}= rank_{d, x}^{\text{worst}} (0) = r_{max}$.

To analyze SPRPT with refined predictions using SOAP, the calculation involves solving a multidimensional summation.
While obtaining a closed-form expression for such sums may be challenging, in this section, we propose a method to compute the relevant components ($X^{\text{new}}[rank_{d}^{\text{worst}} (a)]$, $X_{0}^{\text{old}}[r_{worst}]$ and $X_{i}^{\text{old}}[r_{worst}]$) for the SOAP analysis without requiring a closed-form solution.

Let $J$ be a tagged job with descriptor $(x, r, a)$, based on its worst future rank.

\textbf{$X^{\text{new}}[rank_{d, x}^{\text{worst}} (a)]$ computation:}
Suppose that a new job $K$ of predicted size $r_K$ arrives when $J$ has age $a$. $K$ has a lower rank than the worst possible rank of job $ J$ is given by:


\[
X_{x_K}^{new}[rank_{d, x}^{\text{worst}}(a)]  = \sum_{a_K=0}^{x_K}  \mathbf{1}\left(r_K[a_K] - a_K < \max_{a < b < x} (r[b] - b)\right) \cdot \mathbf{1}\left(\forall j < a_I, \, r_I[j] - j < \max_{a < b < x} (r[b] - b)\right)
\]

where $ \mathbf{1}(\cdot)$ is the indicator function, which equals 1 if the condition inside is true and 0 otherwise.

\begin{align*}
\mathbb{E}\left[X^{\text{new}}\left[\text{rank}_{d, x}^{\text{worst}}(a)\right]\right] &= \sum_{x_K=0}^{\infty} \left( \sum_{a_K=0}^{x_K}  \mathbf{1}\left(r_K[a_K] - a_K < \max_{a < b < x} \left(r[b] - b\right)\right) \right. \\
&\quad \cdot \mathbf{1}\left(\forall j < a_I, \, r_I[j] - j < \max_{a < b < x} \left(r[b] - b\right)\right) \\
&\left.\quad \cdot g(x_K, y)\right)
\end{align*}

where the density function $ g(x_K, y)$ is defined as:

\[
g(x_K, y) = \sum_{y_0=0}^{\infty} \sum_{y_1=0}^{\infty} \dots \sum_{y_{x_K-1}=0}^{\infty} P(Y_0 = y_0, Y_1 = y_1, \dots, Y_{x_K-1} = y_{x_K-1} \mid X = x_K)
\]

\textbf{$X_{0}^{\text{old}}[r_{worst}]$ computation:}
Let $I$ be an old job in the system. Whether job $I$ is an original depends on its predicted size relative to J's predicted size.
$I$ is original till it is exceeds $r_{max}$. we terminate the summing once you encounter an age where $r_I[a_I] - a_I > r_{max}$:

\[
X_{0, x_I}^{\text{old}}[r_{max}] = \sum_{a_I=0}^{x_I} \mathbf{1}\left(r_I[a_I] - a_I < r_{max} \right) \cdot \mathbf{1}\left(\forall j < a_I, \, r_I[j] - j < r_{max}\right)
\]

\textbf{$X_{i}^{\text{old}}[r_{worst}]$ computation:}
If $r_I > r$, then $I$ starts discarded but becomes recycled when $r_I[a] - a = r_{worst}$. This means at age $a = r_I[a_I] -r_{worst}$ and served till it will be again about $r_{worst}$,

Let $i$ be an interval index. The $i$-interval for job $ I$ is defined as the interval of ages during which the rank of job $ I$ is less than $ r_{max}$. Specifically:

\[
b_0[r] = 0
\]
This indicates that the first interval starts at age 0.

\[
c_0[r] = \inf \{a > 0 \mid r_I(a) - a > r_{max} \}
\]
Here, $c_0[r]$ is the smallest age greater than 0 where the rank function $r_I(a) - a$ becomes greater than $r_{max}$, marking the end of the first interval where the rank is less than $r_{max}$.

For subsequent intervals $i\geq 1$:

\[
b_i[r] = \inf \{a \geq c_{i-1}[r] \mid r_I(a) - a < r_{max} \}
\]
This is the starting point of the $i$-th interval, which begins as soon as the rank $r_I(a) - a$ drops below $ r_{max}$ again after the previous interval $[b_{i-1}[r], c_{i-1}[r]]$.

\[
c_i[r] = \inf \{a > b_i[r] \mid r_I(a) - a > r_{max} \}
\]
Here, $c_i[r]$ is the endpoint of the $i$-th interval, where the rank $r_I(a) - a$ becomes greater than $r_{max}$ again.

For $i\geq 0$, the $i$-interval work is a random variable, written $X_i^{\text{OLD}}[r]$, representing the sum of the tokens where job $I$ has rank less than $r_{max}$ within its $i$-interval.

\[
X_i^{\text{OLD}}[r] = 
\begin{cases} 
0 & \text{if } X_d < b_i[r] \\
X_d - b_i[r] & \text{if } b_i[r] \leq X_d < c_i[r] \\
c_i[r] - b_i[r] & \text{if } c_i[r] \leq X_d
\end{cases}
\]

If $b_i[r] = c_i[r] = \infty$, we define $ X_i^{\text{OLD}}[r] = 0$.

\end{document}

\section{Submission of conference papers to ICLR 2021}

ICLR requires electronic submissions, processed by
\url{https://openreview.net/}. See ICLR's website for more instructions.

If your paper is ultimately accepted, the statement {\tt
  {\textbackslash}iclrfinalcopy} should be inserted to adjust the
format to the camera ready requirements.

The format for the submissions is a variant of the NeurIPS format.
Please read carefully the instructions below, and follow them
faithfully.

\subsection{Style}

Papers to be submitted to ICLR 2021 must be prepared according to the
instructions presented here.


Authors are required to use the ICLR \LaTeX{} style files obtainable at the
ICLR website. Please make sure you use the current files and
not previous versions. Tweaking the style files may be grounds for rejection.

\subsection{Retrieval of style files}

The style files for ICLR and other conference information are available online at:
\begin{center}
   \url{http://www.iclr.cc/}
\end{center}
The file \verb+iclr2021_conference.pdf+ contains these
instructions and illustrates the
various formatting requirements your ICLR paper must satisfy.
Submissions must be made using \LaTeX{} and the style files
\verb+iclr2021_conference.sty+ and \verb+iclr2021_conference.bst+ (to be used with \LaTeX{}2e). The file
\verb+iclr2021_conference.tex+ may be used as a ``shell'' for writing your paper. All you
have to do is replace the author, title, abstract, and text of the paper with
your own.

The formatting instructions contained in these style files are summarized in
sections \ref{gen_inst}, \ref{headings}, and \ref{others} below.

\section{General formatting instructions}
\label{gen_inst}

The text must be confined within a rectangle 5.5~inches (33~picas) wide and
9~inches (54~picas) long. The left margin is 1.5~inch (9~picas).
Use 10~point type with a vertical spacing of 11~points. Times New Roman is the
preferred typeface throughout. Paragraphs are separated by 1/2~line space,
with no indentation.

Paper title is 17~point, in small caps and left-aligned.
All pages should start at 1~inch (6~picas) from the top of the page.

Authors' names are
set in boldface, and each name is placed above its corresponding
address. The lead author's name is to be listed first, and
the co-authors' names are set to follow. Authors sharing the
same address can be on the same line.

Please pay special attention to the instructions in section \ref{others}
regarding figures, tables, acknowledgments, and references.

There will be a strict upper limit of 8 pages for the main text of the initial submission, with unlimited additional pages for citations. Note that the upper page limit differs from last year!Authors may use as many pages of appendices (after the bibliography) as they wish, but reviewers are not required to read these. During the rebuttal phase and for the camera ready version, authors are allowed one additional page for the main text, for a strict upper limit of 9 pages.

\section{Headings: first level}
\label{headings}

First level headings are in small caps,
flush left and in point size 12. One line space before the first level
heading and 1/2~line space after the first level heading.

\subsection{Headings: second level}

Second level headings are in small caps,
flush left and in point size 10. One line space before the second level
heading and 1/2~line space after the second level heading.

\subsubsection{Headings: third level}

Third level headings are in small caps,
flush left and in point size 10. One line space before the third level
heading and 1/2~line space after the third level heading.

\section{Citations, figures, tables, references}
\label{others}

These instructions apply to everyone, regardless of the formatter being used.

\subsection{Citations within the text}

Citations within the text should be based on the \texttt{natbib} package
and include the authors' last names and year (with the ``et~al.'' construct
for more than two authors). When the authors or the publication are
included in the sentence, the citation should not be in parenthesis using \verb|\citet{}| (as
in ``See \citet{Hinton06} for more information.''). Otherwise, the citation
should be in parenthesis using \verb|\citep{}| (as in ``Deep learning shows promise to make progress
towards AI~\citep{Bengio+chapter2007}.'').

The corresponding references are to be listed in alphabetical order of
authors, in the \textsc{References} section. As to the format of the
references themselves, any style is acceptable as long as it is used
consistently.

\subsection{Footnotes}

Indicate footnotes with a number\footnote{Sample of the first footnote} in the
text. Place the footnotes at the bottom of the page on which they appear.
Precede the footnote with a horizontal rule of 2~inches
(12~picas).\footnote{Sample of the second footnote}

\subsection{Figures}

All artwork must be neat, clean, and legible. Lines should be dark
enough for purposes of reproduction; art work should not be
hand-drawn. The figure number and caption always appear after the
figure. Place one line space before the figure caption, and one line
space after the figure. The figure caption is lower case (except for
first word and proper nouns); figures are numbered consecutively.

Make sure the figure caption does not get separated from the figure.
Leave sufficient space to avoid splitting the figure and figure caption.

You may use color figures.
However, it is best for the
figure captions and the paper body to make sense if the paper is printed
either in black/white or in color.
\begin{figure}[h]
\begin{center}
\fbox{\rule[-.5cm]{0cm}{4cm} \rule[-.5cm]{4cm}{0cm}}
\end{center}
\caption{Sample figure caption.}
\end{figure}

\subsection{Tables}

All tables must be centered, neat, clean and legible. Do not use hand-drawn
tables. The table number and title always appear before the table. See
Table~\ref{sample-table}.

Place one line space before the table title, one line space after the table
title, and one line space after the table. The table title must be lower case
(except for first word and proper nouns); tables are numbered consecutively.

\begin{table}[t]
\caption{Sample table title}
\label{sample-table}
\begin{center}
\begin{tabular}{ll}
\multicolumn{1}{c}{\bf PART}  &\multicolumn{1}{c}{\bf DESCRIPTION}
\\ \hline \\
Dendrite         &Input terminal \\
Axon             &Output terminal \\
Soma             &Cell body (contains cell nucleus) \\
\end{tabular}
\end{center}
\end{table}

\section{Default Notation}

In an attempt to encourage standardized notation, we have included the
notation file from the textbook, \textit{Deep Learning}
\cite{goodfellow2016deep} available at
\url{https://github.com/goodfeli/dlbook_notation/}.  Use of this style
is not required and can be disabled by commenting out
\texttt{math\_commands.tex}.

\centerline{\bf Numbers and Arrays}
\bgroup
\def\arraystretch{1.5}
\begin{tabular}{p{1in}p{3.25in}}
$\displaystyle a$ & A scalar (integer or real)\\
$\displaystyle \va$ & A vector\\
$\displaystyle \mA$ & A matrix\\
$\displaystyle \tA$ & A tensor\\
$\displaystyle \mI_n$ & Identity matrix with $n$ rows and $n$ columns\\
$\displaystyle \mI$ & Identity matrix with dimensionality implied by context\\
$\displaystyle \ve^{(i)}$ & Standard basis vector $[0,\dots,0,1,0,\dots,0]$ with a 1 at position $i$\\
$\displaystyle \text{diag}(\va)$ & A square, diagonal matrix with diagonal entries given by $\va$\\
$\displaystyle \ra$ & A scalar random variable\\
$\displaystyle \rva$ & A vector-valued random variable\\
$\displaystyle \rmA$ & A matrix-valued random variable\\
\end{tabular}
\egroup
\vspace{0.25cm}

\centerline{\bf Sets and Graphs}
\bgroup
\def\arraystretch{1.5}

\begin{tabular}{p{1.25in}p{3.25in}}
$\displaystyle \sA$ & A set\\
$\displaystyle \R$ & The set of real numbers \\
$\displaystyle \{0, 1\}$ & The set containing 0 and 1 \\
$\displaystyle \{0, 1, \dots, n \}$ & The set of all integers between $0$ and $n$\\
$\displaystyle [a, b]$ & The real interval including $a$ and $b$\\
$\displaystyle (a, b]$ & The real interval excluding $a$ but including $b$\\
$\displaystyle \sA \backslash \sB$ & Set subtraction, i.e., the set containing the elements of $\sA$ that are not in $\sB$\\
$\displaystyle \gG$ & A graph\\
$\displaystyle \parents_\gG(\ervx_i)$ & The parents of $\ervx_i$ in $\gG$
\end{tabular}
\vspace{0.25cm}

\centerline{\bf Indexing}
\bgroup
\def\arraystretch{1.5}

\begin{tabular}{p{1.25in}p{3.25in}}
$\displaystyle \eva_i$ & Element $i$ of vector $\va$, with indexing starting at 1 \\
$\displaystyle \eva_{-i}$ & All elements of vector $\va$ except for element $i$ \\
$\displaystyle \emA_{i,j}$ & Element $i, j$ of matrix $\mA$ \\
$\displaystyle \mA_{i, :}$ & Row $i$ of matrix $\mA$ \\
$\displaystyle \mA_{:, i}$ & Column $i$ of matrix $\mA$ \\
$\displaystyle \etA_{i, j, k}$ & Element $(i, j, k)$ of a 3-D tensor $\tA$\\
$\displaystyle \tA_{:, :, i}$ & 2-D slice of a 3-D tensor\\
$\displaystyle \erva_i$ & Element $i$ of the random vector $\rva$ \\
\end{tabular}
\egroup
\vspace{0.25cm}

\centerline{\bf Calculus}
\bgroup
\def\arraystretch{1.5}
\begin{tabular}{p{1.25in}p{3.25in}}
$\displaystyle\frac{d y} {d x}$ & Derivative of $y$ with respect to $x$\\ [2ex]
$\displaystyle \frac{\partial y} {\partial x} $ & Partial derivative of $y$ with respect to $x$ \\
$\displaystyle \nabla_\vx y $ & Gradient of $y$ with respect to $\vx$ \\
$\displaystyle \nabla_\mX y $ & Matrix derivatives of $y$ with respect to $\mX$ \\
$\displaystyle \nabla_\tX y $ & Tensor containing derivatives of $y$ with respect to $\tX$ \\
$\displaystyle \frac{\partial f}{\partial \vx} $ & Jacobian matrix $\mJ \in \R^{m\times n}$ of $f: \R^n \rightarrow \R^m$\\
$\displaystyle \nabla_\vx^2 f(\vx)\text{ or }\mH( f)(\vx)$ & The Hessian matrix of $f$ at input point $\vx$\\
$\displaystyle \int f(\vx) d\vx $ & Definite integral over the entire domain of $\vx$ \\
$\displaystyle \int_\sS f(\vx) d\vx$ & Definite integral with respect to $\vx$ over the set $\sS$ \\
\end{tabular}
\egroup
\vspace{0.25cm}

\centerline{\bf Probability and Information Theory}
\bgroup
\def\arraystretch{1.5}
\begin{tabular}{p{1.25in}p{3.25in}}
$\displaystyle P(\ra)$ & A probability distribution over a discrete variable\\
$\displaystyle p(\ra)$ & A probability distribution over a continuous variable, or over
a variable whose type has not been specified\\
$\displaystyle \ra \sim P$ & Random variable $\ra$ has distribution $P$\\
$\displaystyle  \E_{\rx\sim P} [ f(x) ]\text{ or } \E f(x)$ & Expectation of $f(x)$ with respect to $P(\rx)$ \\
$\displaystyle \Var(f(x)) $ &  Variance of $f(x)$ under $P(\rx)$ \\
$\displaystyle \Cov(f(x),g(x)) $ & Covariance of $f(x)$ and $g(x)$ under $P(\rx)$\\
$\displaystyle H(\rx) $ & Shannon entropy of the random variable $\rx$\\
$\displaystyle \KL ( P \Vert Q ) $ & Kullback-Leibler divergence of P and Q \\
$\displaystyle \mathcal{N} ( \vx ; \vmu , \mSigma)$ & Gaussian distribution %
over $\vx$ with mean $\vmu$ and covariance $\mSigma$ \\
\end{tabular}
\egroup
\vspace{0.25cm}

\centerline{\bf Functions}
\bgroup
\def\arraystretch{1.5}
\begin{tabular}{p{1.25in}p{3.25in}}
$\displaystyle f: \sA \rightarrow \sB$ & The function $f$ with domain $\sA$ and range $\sB$\\
$\displaystyle f \circ g $ & Composition of the functions $f$ and $g$ \\
  $\displaystyle f(\vx ; \vtheta) $ & A function of $\vx$ parametrized by $\vtheta$.
  (Sometimes we write $f(\vx)$ and omit the argument $\vtheta$ to lighten notation) \\
$\displaystyle \log x$ & Natural logarithm of $x$ \\
$\displaystyle \sigma(x)$ & Logistic sigmoid, $\displaystyle \frac{1} {1 + \exp(-x)}$ \\
$\displaystyle \zeta(x)$ & Softplus, $\log(1 + \exp(x))$ \\
$\displaystyle || \vx ||_p $ & $\normlp$ norm of $\vx$ \\
$\displaystyle || \vx || $ & $\normltwo$ norm of $\vx$ \\
$\displaystyle x^+$ & Positive part of $x$, i.e., $\max(0,x)$\\
$\displaystyle \1_\mathrm{condition}$ & is 1 if the condition is true, 0 otherwise\\
\end{tabular}
\egroup
\vspace{0.25cm}

\section{Final instructions}
Do not change any aspects of the formatting parameters in the style files.
In particular, do not modify the width or length of the rectangle the text
should fit into, and do not change font sizes (except perhaps in the
\textsc{References} section; see below). Please note that pages should be
numbered.

\section{Preparing PostScript or PDF files}

Please prepare PostScript or PDF files with paper size ``US Letter'', and
not, for example, ``A4''. The -t
letter option on dvips will produce US Letter files.

Consider directly generating PDF files using \verb+pdflatex+
(especially if you are a MiKTeX user).
PDF figures must be substituted for EPS figures, however.

Otherwise, please generate your PostScript and PDF files with the following commands:
\begin{verbatim}
dvips mypaper.dvi -t letter -Ppdf -G0 -o mypaper.ps
ps2pdf mypaper.ps mypaper.pdf
\end{verbatim}

\subsection{Margins in LaTeX}

Most of the margin problems come from figures positioned by hand using
\verb+\special+ or other commands. We suggest using the command
\verb+\includegraphics+
from the graphicx package. Always specify the figure width as a multiple of
the line width as in the example below using .eps graphics
\begin{verbatim}
   \usepackage[dvips]{graphicx} ...
   \includegraphics[width=0.8\linewidth]{myfile.eps}
\end{verbatim}
or 
\begin{verbatim}
   \usepackage[pdftex]{graphicx} ...
   \includegraphics[width=0.8\linewidth]{myfile.pdf}
\end{verbatim}
for .pdf graphics.
See section~4.4 in the graphics bundle documentation (\url{http://www.ctan.org/tex-archive/macros/latex/required/graphics/grfguide.ps})

A number of width problems arise when LaTeX cannot properly hyphenate a
line. Please give LaTeX hyphenation hints using the \verb+\-+ command.

\subsubsection*{Author Contributions}
If you'd like to, you may include  a section for author contributions as is done
in many journals. This is optional and at the discretion of the authors.